\newcommand{\res}{{\bm r}}
\newcommand{\smlb}{{\bm b}}
\definecolor{myblue}{RGB}{0,0, 150}
\definecolor{mygray}{RGB}{90,90,110}
\newtheorem{proposition}{Proposition}[section]
\newtheorem{definition}{Definition}[section]
\crefname{theorem}{theorem}{theorems}
\Crefname{theorem}{Theorem}{Theorems}
\crefname{proposition}{proposition}{propositions}
\Crefname{proposition}{Proposition}{Propositions}
\crefname{definition}{Definition}{Definitions}
\Crefname{definition}{Definition}{Definitions}
\crefname{assumption}{Assumption}{Assumptions}
\Crefname{assumption}{Assumption}{Assumptions}
\renewcommand{\ALG@beginalgorithmic}{\small}
\title{From Flat to Hierarchical: Extracting Sparse Representations with Matching Pursuit}
\author{%
 Valérie Costa$^{1}$\hspace{-1pt}\thanks{Equal contribution. $^\dagger$Equal advising. Emails: 
 \texttt{btolooshams@ualberta.ca}, \texttt{demba@seas.harvard.edu}.}
 \hspace{3pt}\quad
 Thomas Fel$^{2*}$ 
 \quad
  Ekdeep Singh Lubana$^{3,4*}$
  \\
  \vspace{1pt}
  \textbf{Bahareh Tolooshams$^{5,6\dagger}$ }
  \quad
  \textbf{Demba Ba$^{2,7\dagger}$} \vspace{4pt}\\
  $^1$EPFL \quad $^2$Kempner Institute, Harvard University\\
  $^3$CBS-NTT Program in Physics of Intelligence, Harvard University\\
  $^4$Physics of Artificial Intelligence Group, NTT Research, Inc., Sunnyvale, CA, USA\\
  $^5$University of Alberta \quad $^6$Alberta Machine Intelligence Institute (Amii)
  \quad $^7$SEAS, Harvard University
}
\begin{document}

\doparttoc 
\faketableofcontents 

\maketitle

\vspace{-5pt}
\begin{abstract}
\vspace{-5pt}
Motivated by the hypothesis that neural network representations encode abstract, interpretable features as linearly accessible, approximately orthogonal directions, sparse autoencoders (SAEs) have become a popular tool in interpretability literature.
However, recent work has demonstrated phenomenology of model representations that lies outside the scope of this hypothesis, showing signatures of hierarchical, nonlinear, and multi-dimensional features.
This raises the question: do SAEs represent features that possess structure at odds with their motivating hypothesis?
If not, does avoiding this mismatch help identify said features and gain further insights into neural network representations?
To answer these questions, we take a construction-based approach and re-contextualize the popular matching pursuit (MP) algorithm from sparse coding to design MP-SAE---an SAE that unrolls its encoder into a sequence of residual-guided steps, allowing it to capture hierarchical and nonlinearly accessible features.
Comparing this architecture with existing SAEs on a mixture of synthetic and natural data settings, we show: (i) hierarchical concepts induce conditionally orthogonal features, which existing SAEs are unable to faithfully capture, and (ii) the nonlinear encoding step of MP-SAE recovers highly meaningful features, helping us unravel shared structure in the seemingly dichotomous representation spaces of different modalities in a vision-language model, hence demonstrating the assumption that useful features are solely linearly accessible is insufficient.
We also show that the sequential encoder principle of MP-SAE affords an additional benefit of adaptive sparsity at inference time, which may be of independent interest.
Overall, we argue our results provide credence to the idea that interpretability should begin with the phenomenology of representations, with methods emerging from assumptions that fit it.
\end{abstract}

\section{Introduction}\label{sec:intro}
Modern neural networks trained on large-scale datasets have achieved unprecedented performance on several practical tasks~\cite{openai2024gpt4technicalreport, o3modelcard, team2023gemini, guo2025deepseek, liu2024deepseek, ravi2024sam, kirillov2023segment, sora, peebles2023scalable, deitke2024molmo}, prompting efforts towards understanding how these abilities are implemented within a model~\cite{doshi2017towards, anwar2024foundational, tripicchio2020deep}.
To this end, Sparse Autoencoders (SAEs)~\cite{bricken2023monosemanticity, gao2025scaling, fel2025archetypal, bussmann2024batchtopk, rajamanoharan2024jumping, rajamanoharan_improving_2024, bussmann2025learning, hindupur2025projecting, templeton2024scaling, cunningham_sparse_2023}, motivated by the Linear Representation Hypothesis (LRH)~\cite{elhage2022toymodelssuperposition, park2023linear}, have become a popular tool for interpreting neural networks.
Specifically, LRH, a phenomenological model of organization and computation of neural network representations~\cite{elhage2022toymodelssuperposition, park2023linear, hanni2024mathematical, adler2025towards, adler2024complexity}, posits that neural network representations of dimension $m$ can be decomposed along a basis of $p \gg m$ approximately orthogonal directions that reflect abstract, interpretable concepts underlying the data distribution.
Here, approximate orthogonality is argued to be necessary for circumventing the problem of packing more unique vectors (concepts) than the dimensionality of the space allows (a.k.a.\ the superposition problem~\cite{elhage2022toymodelssuperposition}), hence enabling reliable estimation of a concept's presence by a linear operator (e.g., the linear map in an MLP)~\cite{beals_extensions_1984, larsen2014johnson}.
Grounded in this idea, and inspired by its relation to the well-known problem of sparse coding~\cite{elad_sparse_2010, olshausen1997sparse}, SAEs have been proposed to learn, in an unsupervised manner, a sparse, overcomplete dictionary of directions that (ideally) map onto abstract, interpretable concepts encoded in a neural network, hence enabling its interpretability~\cite{demircan2024sparse, bricken2023monosemanticity}.

\begin{wrapfigure}[28]{r}{0.37\textwidth}
\vspace{-15pt}
  \begin{center}
    \includegraphics[width=\linewidth]{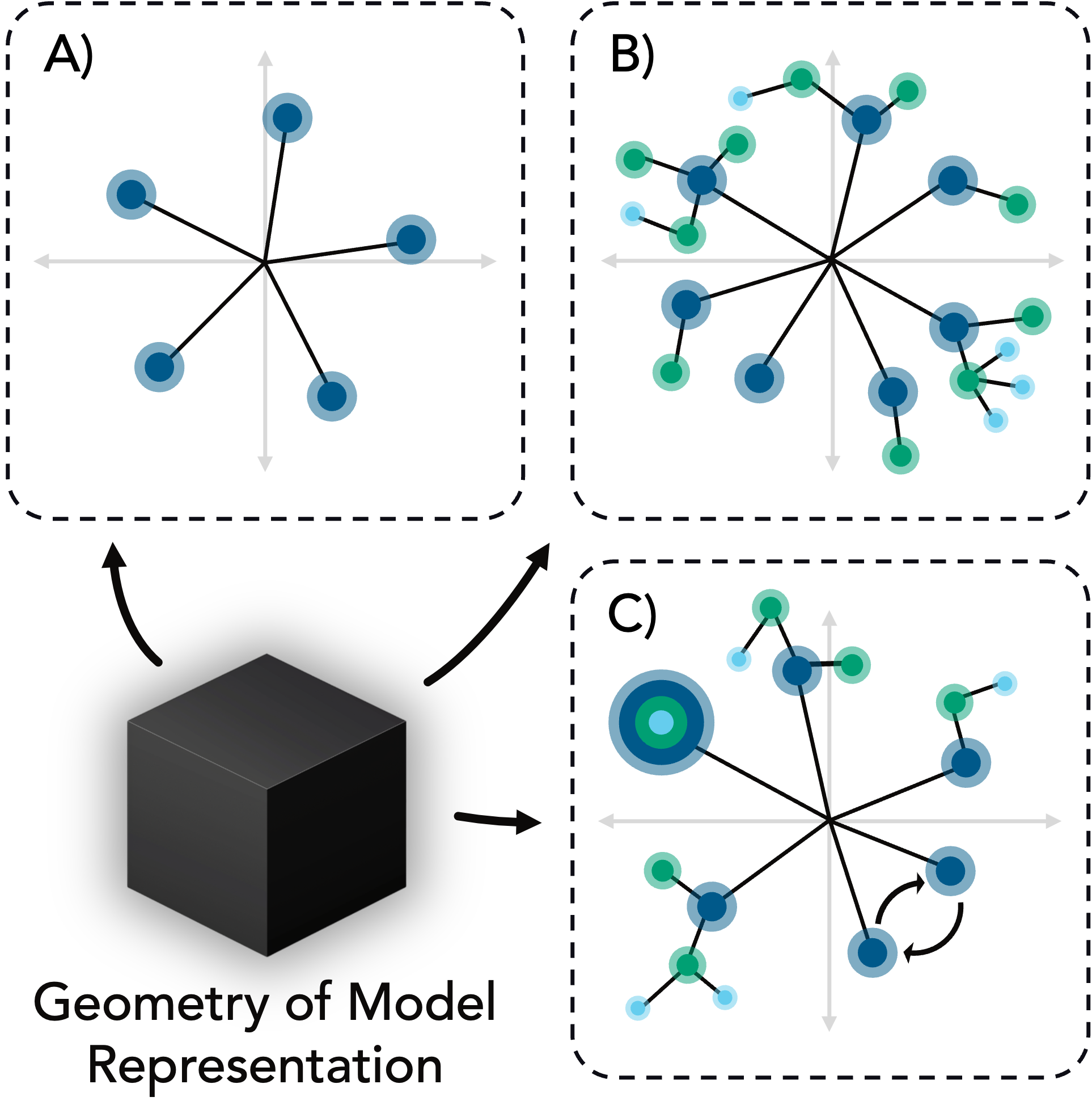}
  \end{center}
\caption{\label{fig:figure1}\textbf{Conceptual organization in neural representations.} 
\textbf{A)} \textit{Linearly accessible concepts}: abstract directions that are approximately orthogonal and independently interpretable, as assumed by the Linear Representation Hypothesis (LRH). 
\textbf{B)} \textit{Hierarchical concepts}: representations structured in parent–child relations.
\textbf{C)} \textit{Nonlinear, multidimensional, and temporally structured concepts}: features that cannot be accessed via a single direction. }
\end{wrapfigure}

Despite the fact that substantial empirical support has been shown in favor of LRH---e.g., representations capturing concepts like geometry and lighting~\cite{chen2023beyond, bau2018gan, bau2020rewriting, abdal2021labels4free, du2023generative}, facial characteristics~\cite{voynov2020unsupervised, shen2020interpreting}, broader scene organizations~\cite{gandelsman2023interpreting, merullo2022linearly, kowal2024understanding}, and instance structures~\cite{olah_zoom_2020, olah2017feature, olah2020naturally, olah2020overview, cammarata2020curve, schubert2021high, fel2024understanding, fel_holistic_2023} have been shown in vision models, while ones capturing concepts like basic semantics~\cite{li2023inference, marks2023geometry, lee2024mechanistic, tigges2023linear, maheswaranathan2019reverse, maheswaranathan2019line}, character roles and theory-of-mind properties~\cite{zhu2024language, nanda_emergent_2023}, arithmetic concepts~\cite{merullo2023language}, refusal to unsafe queries~\cite{arditi2024refusal, jain2024makes}, and subject--object relations~\cite{hernandez2023linearity, merullo2025linear, arora_latent_2016, arora2018linear} have been shown in language models---our goal in this work is to better contextualize the assumptions underlying the design of SAEs in light of recent evidence \textit{against} the validity of LRH~\cite{wattenberg_relational_2024, smith_strong_2024}.
For example, Park et al.~\cite{park2024geometry} recently analyzed the organization of hierarchical and categorical concepts in neural network representations, showing that cross-entropy loss tends to encourage orthogonal structure for hierarchical relations, while categorical concepts are arranged as polytopes with overlapping support.
Meanwhile, Csordas et al.~\cite{csordas_recurrent_2024} demonstrated the existence of ``onion-like'' representations in a simple copying task that cannot be linearly accessed; the existence of such ``nonlinear'' representations in larger-scale models was also emphasized by Engels et al.~\cite{engels_decomposing_2025}.
Finally, the existence of multi-dimensional features for concepts with temporal relations, e.g., days-of-the-week, has been used to argue against the use of directions as a basis for decomposing neural network representations~\cite{engels_not_2025, hindupur2025projecting}; this is also supported by experiments showing concepts' representations and dimensionality can be flexibly manipulated via the inputted context~\cite{park_iclr_2025}.

The disparity highlighted above between phenomenology of neural network representations identified in recent work versus one hypothesized by LRH raises the question as to whether SAEs, which were motivated by LRH, are able to capture and explain representations that lie outside the scope of LRH (Fig.~\ref{fig:figure1}).
Since the SAE architecture is not explicitly biased against such representations, is it possible SAEs remain performant towards expressing them, or do these disparities manifest as limitations on what concepts SAEs can explain? 
To address these questions, we make the following contributions.

\begin{itemize}[leftmargin=5mm]
    \setlength\itemsep{0.2em}

    \item \textbf{Contextualizing SAEs beyond the linear representation hypothesis.} Via a mixture of synthetic and natural data, we analyze whether popular SAE architectures are able to capture concepts encoded in a hierarchical, nonlinearly accessible manner (see Hindupur et al.~\cite{hindupur2025projecting} for an analysis of multi-dimensional concepts). To this end, we introduce the notion of \emph{Conditional Orthogonality} (Def.~\ref{definition:conditional_orthogonality}), under which, in a hierarchy, orthogonality is expected only between concepts within across different levels. Our experiments yield consistent evidence that existing SAEs, including those trained with hierarchical objectives~\cite{bussmann2025learning}, are unable to capture such concepts faithfully.

    \item \textbf{Introducing MP-SAE for phenomena beyond linear assumptions.} To contextualize the results above and understand if capturing hierarchical structures yields meaningful features in larger-scale models, we design an SAE architecture that can capture (certain kinds of) hierarchical and nonlinear structures. Specifically, we extend the standard SAE framework by unrolling the Matching Pursuit algorithm~\cite{MallatMP} into a residual-guided, sequential encoder---called the \emph{MP-SAE}. Each step of MP-SAE selects a feature (approximately) orthogonal to what has already been explained, naturally promoting a conditionally orthogonal structure. Across experiments on both synthetic benchmarks and real-world vision-language models, MP-SAE proves more expressive and uncovers richer structural features than existing SAEs. These results question the assumption that meaningful features must be linearly and independently accessible. 

    \item \textbf{Additional advantages of MP-SAE.} We find MP-SAE recovers richer structure than standard SAEs across diverse settings: on synthetic trees with controlled interference, it uniquely preserves both within- and across-level organization; on large-scale models (e.g., CLIP~\cite{radford2021learning}, DINOv2~\cite{oquab2023dinov2}), it identifies nonlinear and multimodal features that elude linear encoders. Moreover, we show MP-SAE offers two practical benefits: (i) it enables adaptive sparsity without retraining—features can be incrementally added at test time—and (ii) it ensures monotonic improvement in reconstruction error with each inference step. These properties especially contrast with fixed-$k$ SAEs, which often degrade when sparsity levels shift at test time. Thus, though our goal for analyzing MP-SAE was to understand whether useful features can be elicited by actively modeling phenomenology outside the scope of LRH, we believe MP-SAE can be of independent interest to the community.
\end{itemize}

\section{Formalizing Linear Representation Hypothesis and Sparse Autoencoders}\label{sec:prelim}
\vspace{-1mm}
We begin by formally describing two objects central to our study: linear representation hypothesis and sparse autoencoders. 
We note that though the idea that concepts are ``linearly encoded'' in neural networks has found significant attention in literature (see App.~\ref{app:related work}), a formal definition has rarely been offered (see \cite{park2023linear, jiang2024origins} for notable exceptions).
Consequently, several related observations on neural network representations have come to be associated with LRH, including linear accessibility of interpretable concepts~\cite{belrose2023eliciting, nanda_emergent_2023}, linear algebraic manipulability of model behavior~\cite{wang_concept_2024, arora2018linear, mikolov_linguistic_2013}, and decomposability into a linear mixture of directions~\cite{bricken2023monosemanticity, cunningham_sparse_2023, templeton2024scaling}.
Hence, to be precise about what we mean by the term, we distill the core claims posited by Elhage et al.~\cite{elhage2022toymodelssuperposition} on LRH and formally state (our interpretation of) the hypothesis.
This definition then directly motivates SAEs.

In what follows, we use plain lowercase letters $a$ to denote scalars, bold lowercase letters $\bm a$ to denote vectors, and uppercase letters $\A$ to denote matrices.
The $k^{\text{th}}$ column of a matrix $\A$ is written as $\A_k$. 
The norms $\|\cdot\|_2$ and $\|\cdot\|_1$ denote the standard $\ell_2$ and $\ell_1$ norms, while $\|\cdot\|_0$ refers to the $\ell_0$ \emph{pseudo}-norm, i.e., the number of nonzero entries. 
The Frobenius norm of a matrix is denoted by $\|\cdot\|_F$. 
For any vector $\bm z$, its support is defined as $\text{supp}(\bm z) \coloneqq \{i : z_i \neq 0\}$, and $\bm z$ is said to be $k$-sparse if $|\text{supp}(\bm z)|
= k$. 
Hereafter, we use ``features'' and ``concepts'' as interchangeable terms, distinguishing them from ``representations'' a model derives for an input.

\begin{definition}[\textbf{Linear Representation Hypothesis (LRH)}]\label{def:lrh}
A representation $\bm x\in\mathbb{R}^{m}$ is said to satisfy the \emph{linear representation hypothesis} (LRH) if there exists a dictionary $\D=[\D_{1},\dots,\D_{p}]\in\mathbb{R}^{m\times p}$ and a coefficient vector $\bm z\in\mathbb{R}^{p}$ such that $\bm x=\D\bm z$, under the following conditions:
\[
\begin{cases}
\text{\textit{(\textbf{i})} Overcompleteness:} & p \gg m ; \\[2pt]
\text{\textit{(\textbf{ii})} Quasi-orthogonality:} &
        \displaystyle\max_{i\neq j}\bigl|\D_{i}^{\!\top}\D_{j}\bigr|\le\varepsilon,
        \quad \text{where }\forall i ~~ \|\D_{i}\|_{2}=1\ ; \text{ and}\\
\text{\textit{(\textbf{iii})} Sparsity:} & |\operatorname{supp}(\bm z)| \le k \ll p.\\
\end{cases}
\]
\end{definition}

We emphasize the constraints above are deeply interdependent. 
In particular, if a model is to represent a large number of distinct concepts ($p \gg m$) within a lower-dimensional space $\mathbb{R}^m$, while ensuring that a linear readout can reliably recover any active concept, the directions associated with these concepts must be approximately orthogonal. 
This requirement is substantiated by a reinterpretation of the Johnson–Lindenstrauss (JL) lemma~\cite{beals_extensions_1984,larsen2014johnson}\footnote{JL-lemma~\cite{beals_extensions_1984} states that \(p = e^{\mathcal{O}(m)}\) vectors can be embedded in \(\mathbb{R}^m\) such that all pairwise inner products are bounded by a small \(\varepsilon > 0\), thereby justifying the feasibility of near-orthogonality even when \(p \gg m\).}. While the lemma is typically used to show that high-dimensional data can be compressed into lower dimensions while preserving pairwise distances, Elhage et al.~\cite{elhage2022toymodelssuperposition} apply a flipped version:  one can embed exponentially many quasi-orthogonal directions \emph{within} $\mathbb{R}^m$ such that sparse linear combinations remain distinguishable.
This perspective justifies the feasibility of constructing overcomplete, low-coherence dictionaries in LRH. Rather than compressing structure, the goal is to expand representational capacity: to pack many interpretable directions into a shared space while preserving linear accessibility. 

Overall, if the conditions above hold, sparse linear decompositions become not only possible, but a natural mechanism for expressing concept-aligned features---this leads to the idea of SAEs.
Specifically, noting that the model described in Def.~\ref{def:lrh} closely aligns with the generative model assumed in classical work on sparse coding~\cite{olshausen1997sparse, elad_sparse_2010, mairal2009online}, wherein one seeks to express data as sparse linear combinations over an overcomplete dictionary, SAEs were recently proposed to re-contextualize that literature's tools for identifying concepts encoded in a neural network's representations~\cite{bricken2023monosemanticity, gao2025scaling, fel2025archetypal, bussmann2024batchtopk, rajamanoharan2024jumping, rajamanoharan_improving_2024, bussmann2025learning, hindupur2025projecting, templeton2024scaling, cunningham_sparse_2023}. 

\begin{definition}[\textbf{Sparse Autoencoders}]
Given model representations \(\bm x \in \R^m\), the goal of an SAE is to compute a sparse code \(\bm z\) that reconstructs \(\bm x\) as a linear combination of a learned dictionary \(\D\): 
\begin{equation} \label{eq:saes}
\begin{aligned}
    \bm z &= \bm{\Pi}\{\W^{\top} (\bm x - \bm b_{\text{pre}}) + \bm b\}, \quad \text{and} \\
    \hat{\bm x} &= \D \bm z + \bm b_{\text{pre}},
\end{aligned}
\end{equation}
where \(\W \in \R^{m \times p}\) and \(\bm b \in \R^p\) denote the encoder weights and bias, \(\bm b_{\text{pre}} \in \R^m\) is a pre-decoding bias, and \(\D \in \R^{m \times p}\) is the learned dictionary of concept vectors. 
\end{definition}
\vspace{-3pt}
In the above, the nonlinearity projection \(\bm{\Pi}\{\cdot\}\) projects the pre-activation to a sparse support~\cite{hindupur2025projecting}; common choices for activation maps include ReLU~\cite{cunningham_sparse_2023,bricken2023monosemanticity}, TopK~\cite{makhzani2014ksae,gao2025scaling, bussmann2024batchtopk}, and JumpReLU~\cite{rajamanoharan2024jumping, rajamanoharan_improving_2024}.
Training proceeds by minimizing a reconstruction loss along with sparsity and auxiliary penalties:
\[
\mathcal{L} = \left\Vert \bm x - \hat{\bm x} \right\Vert_2^2 + \lambda\, \mathcal{R}(\bm z) + \alpha\, \mathcal{L}_{\text{aux}},
\]
where $\mathcal{R}(\bm z)$ promotes sparsity, via $\ell_1$ or $\ell_0$ mechanisms, and $\mathcal{L}_{\text{aux}}$ may be used to minimize number of inactive units~\cite{gao2025scaling, rajamanoharan_improving_2024}.

\subsection{Stress-Testing SAEs via Conditional Orthogonality---a Structure Outside LRH's Scope}

\begin{wrapfigure}[22]{R}{0.5\textwidth}
\vspace{-21pt}
  \begin{center}
    \includegraphics[width=\linewidth]{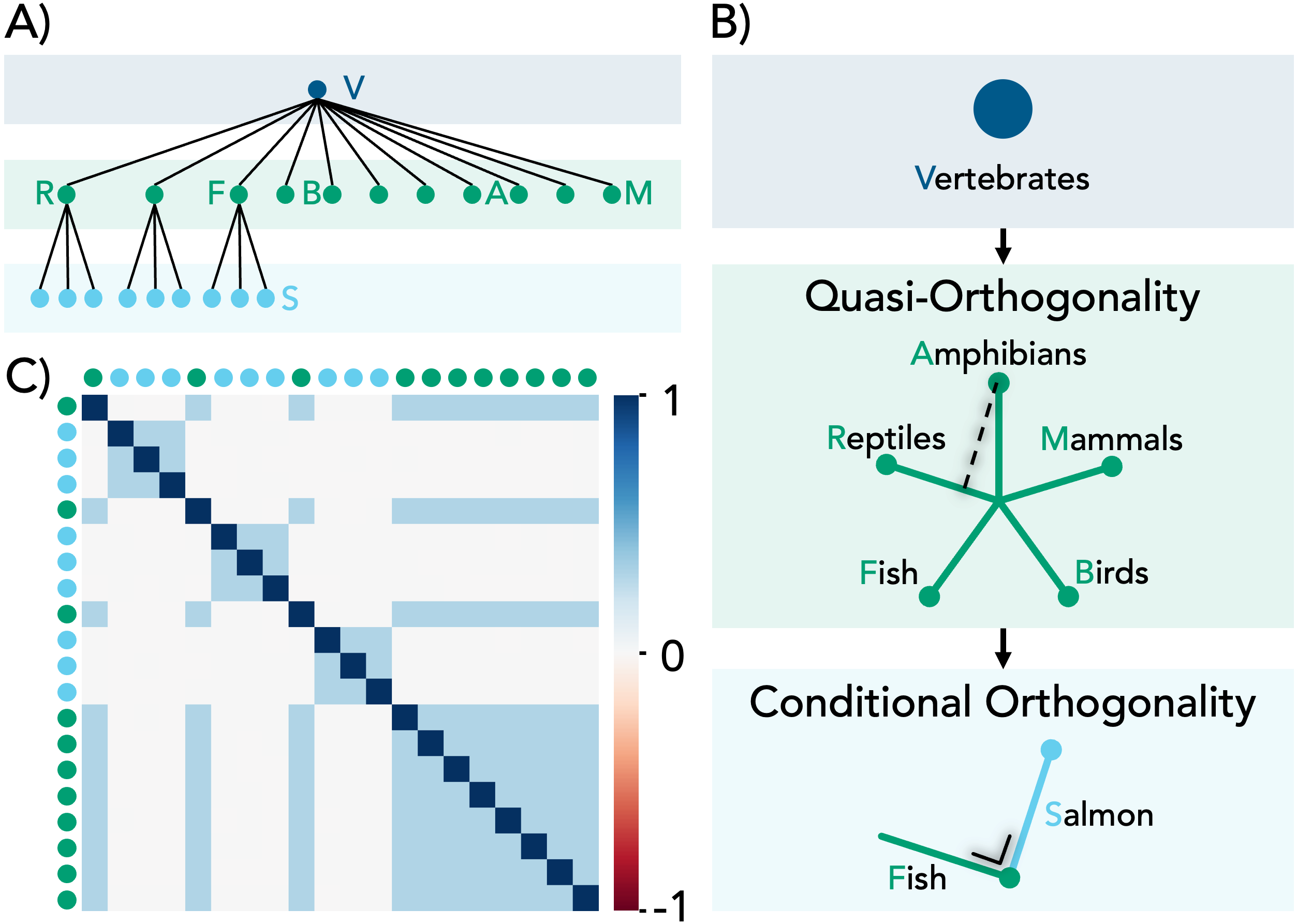}
  \end{center}
 \vspace{-10pt}
\caption{\label{fig:tree}\textbf{Illustrative Example of Conditional vs. Quasi-Orthogonality.} 
\textbf{A)} Example of a hierarchical concept tree. 
\textbf{B)} Comparison of quasi-orthogonality (interference within levels) vs. conditional orthogonality (orthogonality across levels). 
\textbf{C)} Correlation matrix of features sampled from A, showing conditional orthogonality (white, $=0$) across levels and quasi-orthogonality (light blue, $=\varepsilon$) within levels.}
\end{wrapfigure}

While SAEs provide a natural operationalization of LRH, their effectiveness hinges on structural assumptions that may not hold in practice. 
In particular, Eq.~\ref{eq:saes} presumes that concepts correspond to approximately orthogonal directions and can be recovered via a linear projection. 
However, as several recent works show, these assumptions can prove overly rigid~\cite{engels_not_2025,smith_strong_2024,engels_decomposing_2025,hindupur2025projecting, park2024geometry}. 
For instance, neural network representations are known to encode hierarchically structured concepts~\cite{saxe2019mathematical, olson2025analyzing, okawa2023compositional, lubana2024percolation, qin2024sometimes, chen2023sudden}. 
Park et al.~\cite{park2024geometry} show that for such hierarchical concepts, while concepts at the same level of hierarchy frequently form polytopes with overlapping support, parent and child concepts tend to span orthogonal subspaces. 
In what follows, we formalize this idea under the term `conditional orthogonality' and use it as a structure to stress-test whether SAEs can identify concepts that encoded in a manner outside the scope of LRH.
We note this definition is merely a paraphrased version of the one offered by Park et al.~\cite{park2024geometry}.

\begin{definition}[\textbf{Conditional Orthogonality}]\label{definition:conditional_orthogonality}
Let \(\D \in \mathbb{R}^{m \times p}\) be a dictionary whose columns \(\D_1, \dots, \D_p\) denote concept directions, and let \(\ell: [p] \to \mathbb{N}\) assign each concept to a discrete level in a hierarchy. We say that \(\D\) is \emph{conditionally orthogonal} with respect to \(\ell\) if
\[
\D_i^\top \D_j = 0 \quad \text{whenever } \ell(i) \ne \ell(j).
\]
\end{definition}
\vspace{-3pt}
Relating back to LRH (Def.~\ref{def:lrh}), we note Def.~\ref{definition:conditional_orthogonality} relaxes the global quasi-orthogonality constraint posited by LRH, requiring instead orthogonality only between concept vectors drawn from different hierarchical levels: specifically, note that no constraint is imposed on inner products \(\D_i^\top \D_j\) for pairs \(i, j \in [p]\) with \(\ell(i) = \ell(j)\).
This permits controlled interference within a given hierarchical level, while preserving separation across levels.
Fig.~\ref{fig:tree}, adapted from Bussmann et al.~\cite{bussmann2025learning}, depicts an instance of such structure, where hierarchical organization gives rise to distinct patterns of inter- and intra-level alignment.
To evaluate whether conventional SAEs (implicitly aligned with LRH) are capable of capturing the interference structure induced by conditional orthogonality, we will analyze a synthetic generative process grounded in the taxonomy of Fig.~\ref{fig:tree} in Sec.~\ref{subsec:syntheticreps}. 
This will allow us to probe the inductive biases of different SAE variants under controlled structural constraints.
However, to contextualize these results, we next develop an SAE that is specifically motivated to capture the ability to model features that are conditionally orthogonal in nature.

\section{Operationalizing Conditional Orthogonality via MP-SAE}
\label{sec:methods}
To enable the ability to identification of features encoded in a \emph{conditionally orthogonal} manner (Def.~\ref{definition:conditional_orthogonality}), we construct a sparse autoencoder whose inference reflects the structure of hierarchical representations---minimizing interference across levels while tolerating it within~\cite{park2024geometry}.
Our design draws on the Matching Pursuit (MP) algorithm~\cite{MallatMP} from sparse coding~\cite{pati1993omp,chen2001atomic,daubehies2004ista,blumensath2008ista}, which has been shown to recover features from the appropriate hierarchical level in conditionally orthogonal dictionaries~\cite{peotta2007matching}.
MP performs inference greedily: at each step, the feature most correlated with the residual is selected, its contribution subtracted, and the process repeated.
This iterative residual decomposition promotes conditionally orthogonal feature selection, as each new feature explains variance orthogonal to what was captured by the previous feature (Fig.~\ref{fig:mp}). 
Embedding this mechanism into a sparse autoencoder yields the \emph{Matching Pursuit Sparse Autoencoder (MP-SAE)}, formalized in Algorithm~\ref{alg:mpsae}.
Below, we unpack the structure and implications of MP-SAE, examining: (\textbf{\textit{i}}) the mechanics of its inference procedure; (\textbf{\textit{ii}}) how this procedure promotes conditional orthogonality; and (\textbf{\textit{iii}}) its capacity to extract higher-order, nonlinearly accessible features.

\begin{figure}[t]
    \centering
    \begin{minipage}{.36\textwidth}
        \centering
        \includegraphics[width=\linewidth]{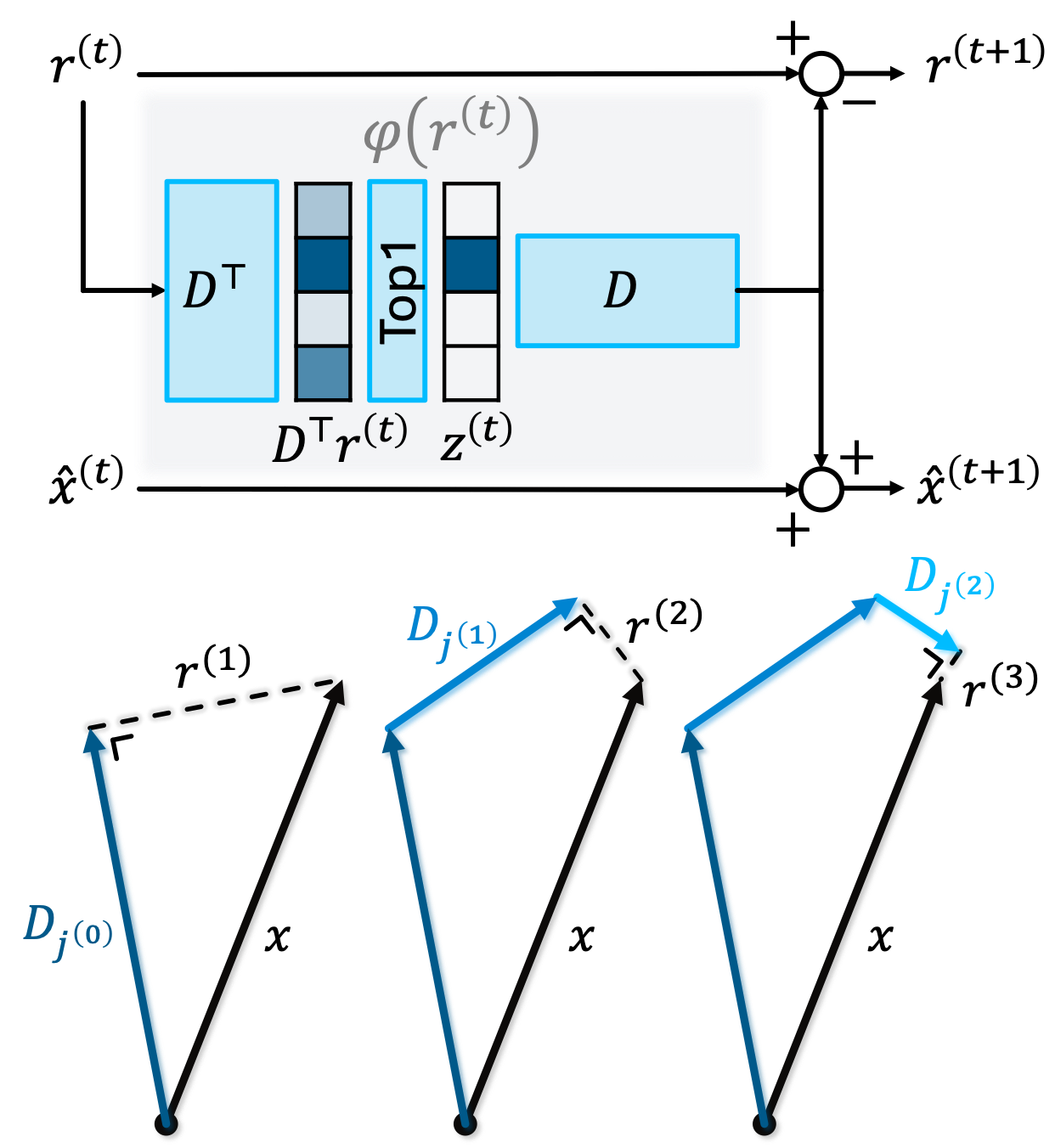}
    \end{minipage}%
        \hfill
    \begin{minipage}{.63\textwidth}
    \centering
    \begin{algorithm}[H]
        \caption{\label{alg:mpsae}Matching Pursuit Sparse Autoencoders (MP-SAE)}
        \begin{algorithmic}[1]
        \State \textbf{Input:} Dictionary $\D$, bias $\bm b_{\text{pre}}$, data $\bm x$, steps $T$
        \State Initialize residual: $\bm r^{(0)} = \bm x - \bm b_{\text{pre}}$\label{algo:init_res}
        \State Initialize reconstruction: $\hat{\bm x}^{(0)} = \bm b_{\text{pre}}$\label{algo:init_x}
        \State Initialize sparse code: $\bm z = \bm 0$
        \For{$t = 0, \ldots, T-1$}
            \State $j^{(t)} = \argmax_{j \in [p]}(\bm \D^\top\bm r^{(t)})_j$\label{algo:selection}
            \State $\z^{(t)}_{j^{(t)}} = \D_{j^{(t)}}^\top \res^{(t)}$\label{algo:code}
            \State $\hat{\bm x}^{(t+1)} = \hat{\bm x}^{(t)} + \z^{(t)}_{j^{(t)}} \D_{j^{(t)}}$\label{algo:rec}
            \State $\bm r^{(t+1)} = \bm r^{(t)} - \z^{(t)}_{j^{(t)}} \D_{j^{(t)}}$\label{algo:res}
        \EndFor
        \State \textbf{Output:}  Sparse code $\bm z=\sum_{t=0}^{T-1} \z^{(t)}$, Reconstruction $\hat{\bm x} = \hat{\bm x}^{(T)}$
        \end{algorithmic}
        \end{algorithm}
\end{minipage}
\caption{
\textbf{(Top Left) One iteration of the MP-SAE encoder:} The residual $\bm r^{(t)}$ is projected onto the dictionary $\D$, the most correlated feature $\D_{j^{(t)}}$ is selected, and its contribution is subtracted from the residual and added to the reconstruction.
\textbf{(Bottom Left) MP sequentially reconstructs $\bm{x}$} by greedily selecting features that best explain the residual, promoting orthogonal selection and enabling access to higher-order features that are nonlinearly accessible from $\x$.
\textbf{(Right)} MP-SAE embeds the MP algorithm into a sparse autoencoder, where the dictionary $\D$ is learned through backpropagation.
}
    \vspace{-2mm}
    \label{fig:mp}
\end{figure}

\paragraph{(\textit{i}) Mechanics of MP-SAE.}
MP-SAE uses a shared learned dictionary for encoding and decoding. The encoder embeds the classical Matching Pursuit algorithm~\cite{MallatMP} by unrolling its greedy inference procedure into a fixed number of steps. At inference, the model starts with an initial estimate and residual (Algorithm \ref{alg:mpsae}, lines \ref{algo:init_x} and \ref{algo:init_res}). At each iteration, MP-SAE greedily selects the feature that best aligns with the current residual by computing the inner product between the residual and each feature, and choosing the one with the highest projection (line \ref{algo:selection}). Once the best-matching feature is identified, the algorithm determines its contribution by projecting the residual onto that feature (line \ref{algo:code}), adds this contribution to the current approximation of the input (line \ref{algo:rec}), and updates the residual accordingly (line \ref{algo:res}). This procedure is repeated for $T$ steps, producing a sparse code with $\|\z\|_0 \leq T$. The resulting encoding represents a sequential approximation of the input, constructed through greedy selection of locally optimal features that progressively refine the reconstruction of $\x$.

\paragraph{(\textit{ii}) Conditional Orthogonality via Sequential Inference.} 
A defining property of MP-SAE is that its greedy inference procedure promotes conditional orthogonality across selected features. This emerges from the residual update rule: once a concept $\D_{j^{(t-1)}}$ has been selected from the dictionary at iteration $t{-}1$, the updated residual $\res^{(t)}$ is orthogonal to it, as stated below formally.

\begin{proposition}[Stepwise Orthogonality of MP Residuals]
Let $\res^{(t)}$ be the residual at iteration $t$ of MP-SAE inference, and let $\D_{j^{(t-1)}}$ be the feature selected at step $t{-}1$. Then:
\[
\D_{j^{(t-1)}}^\top \res^{(t)} = 0.
\]
\end{proposition}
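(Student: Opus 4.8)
The statement is an immediate consequence of the residual update rule, so the plan is simply to unwind that recursion by one step and project. Concretely, I would instantiate line~\ref{algo:res} of Algorithm~\ref{alg:mpsae} at iteration $t-1$, which gives
\[
\res^{(t)} = \res^{(t-1)} - \z^{(t-1)}_{j^{(t-1)}}\,\D_{j^{(t-1)}},
\]
and then left-multiply both sides by $\D_{j^{(t-1)}}^{\top}$. Substituting the coefficient definition from line~\ref{algo:code}, namely $\z^{(t-1)}_{j^{(t-1)}} = \D_{j^{(t-1)}}^{\top}\res^{(t-1)}$, yields
\[
\D_{j^{(t-1)}}^{\top}\res^{(t)} = \D_{j^{(t-1)}}^{\top}\res^{(t-1)} - \bigl(\D_{j^{(t-1)}}^{\top}\res^{(t-1)}\bigr)\,\lVert\D_{j^{(t-1)}}\rVert_2^{2}.
\]
Invoking the unit-norm normalization of dictionary columns (condition~(ii) of Def.~\ref{def:lrh}, and standard in the MP literature), $\lVert\D_{j^{(t-1)}}\rVert_2 = 1$, the two terms cancel and the expression is $0$, as claimed.

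\textbf{Where to be careful.} The only nontrivial ingredient is the normalization assumption on the selected atom; without it one should interpret line~\ref{algo:code} as an orthogonal projection, i.e.\ divide the coefficient by $\lVert\D_{j^{(t-1)}}\rVert_2^{2}$, after which the same one-line cancellation goes through verbatim. I would state this explicitly so the claim is unambiguous. Note also that nothing about the argmax in line~\ref{algo:selection} is needed here: the orthogonality holds for \emph{whatever} index was chosen at step $t-1$, since it follows purely from subtracting the projection onto that atom.

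\textbf{Main obstacle.} There is essentially none for this statement in isolation---it is a two-line computation. The conceptual work lies in what this lemma is used for downstream: leveraging stepwise orthogonality to argue that a full MP-SAE run produces a (conditionally) orthogonal selection pattern, where one must additionally track that earlier-selected atoms need not remain orthogonal to \emph{later} residuals (MP can re-select an atom), so the clean "each residual $\perp$ the immediately preceding atom" statement is the right granular fact to isolate first.
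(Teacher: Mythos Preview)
Your proposal is correct and follows essentially the same route as the paper: write the residual update $\res^{(t)} = \res^{(t-1)} - \z^{(t-1)}_{j^{(t-1)}}\D_{j^{(t-1)}}$, take the inner product with $\D_{j^{(t-1)}}$, substitute the coefficient definition, and cancel using $\lVert\D_{j^{(t-1)}}\rVert_2 = 1$. Your explicit flagging of the unit-norm assumption is apt---the appendix version of the proposition in fact adds this hypothesis to the statement.
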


\newpage
Each selected concept is removed from the residual subspace before the next selection, and subsequent concepts are chosen to explain what remains in the residual, orthogonal to the last selected feature.
When trained, this yields a dictionary of features that are not globally orthogonal but are selectively chosen to be conditionally orthogonal. Although MP-SAE only enforces orthogonality to the most recently selected concept—unlike Orthogonal Matching Pursuit \cite{pati1993omp}, which re-orthogonalizes against all prior selections—we observe empirically that residuals are often nearly orthogonal to all previously selected directions.
This emergent behavior suggests the model implicitly promotes hierarchical separation in the learned dictionary, minimizing interference across levels. It aligns with the inductive bias of conditional orthogonality (Def.~\ref{definition:conditional_orthogonality}) and contrasts with standard SAEs, which tend to promote global quasi-orthogonality regardless of feature structure.

\paragraph{(\textit{iii}) Access nonlinearly accessible features.}
Beyond promoting conditional orthogonality, the residual-based inference structure of MP-SAE enables access to features that are nonlinearly embedded in the representation space. While each iteration applies a linear projection to the current residual, the residual itself evolves nonlinearly as a function of the input, due to its recursive dependence on previous selections. This results in a structured approximation of $\x$ that can be decomposed as:
\vspace{-2mm}
\begin{equation*}
\x = \res^{(0)} = \bm{\varphi}(\x) + \res^{(1)} =
\underbrace{\bm{\varphi}(\x)}_{\text{\textit{linearly accessible}}} +
\underbrace{\sum_{t=1}^{T} \bm{\varphi}(\res^{(t)})}_{\text{\textit{nonlinearly accessible}}} +
\underbrace{\res^{(T+1)}}_{\text{\textit{residual error}}},
\vspace{-2mm}
\end{equation*}
where $\bm{\varphi}(\cdot)$ denotes the linear projection onto the selected feature at each step (See Fig.~\ref{fig:mp}).
The crucial insight is that, although each $\bm{\varphi}(\res^{(t)})$ is linear in its argument, the argument itself, $\res^{(t)}$, is a nonlinear function of $\x$. As a result, the composition $\bm{\varphi}(\res^{(t)})$ defines a feature that cannot be obtained from $\x$ via a single linear map. MP-SAE can thus potentially uncover higher-order concepts that are conditionally dependent on previously explained structure. 
This mechanism may be particularly relevant in settings where important features are entangled or nonlinearly composed such as hierarchies, temporal dependencies, or multimodal correlations. 
Moreover, it provides a constructive hypothesis for the phenomenon of ``dark matter'' in neural representations~\cite{engels_decomposing_2025}: features that evade standard SAEs because they are not linearly \textit{accessible} from the raw representation; that is, one can still have the LRH assumption of a linear mixing process hold, i.e., $x = Dz$, but other constraints start to relax (Def.~\ref{def:lrh}). 
We return to this phenomenon in our empirical analysis. 

\section{Empirical Results}\label{sec:experiments}

We now take a step towards uncovering challenges in SAEs emergent from assuming a partially correct model of neural network representations, i.e., LRH.
Specifically, in Sec.~\ref{subsec:syntheticreps}, we analyze a synthetic domain that highlight the inability of existing SAEs to identify hierarchically structured concepts in a clearly defined domain.
Building on these results, in Sec.~\ref{subsec:natural}, we analyze a natural vision–language domain to assess how features extracted under an inductive bias that accommodates hierarchical, nonlinearly accessible concepts—as in MP-SAE—differ from those identified by existing SAEs. In particular, we show that MP-SAE uniquely recovers cross-modal structure.

\subsection{A Synthetic Generative Model of Hierarchical Features}
\label{subsec:syntheticreps}
We begin by evaluating whether SAEs, including our proposed MP-SAE, can recover conditionally orthogonal features using a synthetic hierarchy-based setup adapted from Bussmann et al.~\cite{bussmann2025learning}.  
To formalize this setting, we introduce the following definition of a hierarchical generative process.

\begin{definition}[\textbf{Hierarchical Generative Process}]\label{def:generative_hierarchical_process}
A generative process over $\bm{p}$ (parent concept) and $\bm{c}$ (child concept) is said to be \emph{hierarchical} if their activations $z_p$ and $z_c$ satisfy
\[
\mathbb{P}(z_p > 0 \mid z_c > 0) = 1 \quad \text{and} \quad \mathbb{P}(z_c > 0 \mid z_p > 0) < 1.
\]
That is, a child’s activation requires its parent’s, whereas a parent may activate independently.
\end{definition}

Based on this definition, we implemented the following hierarchical generative process. The ground truth $\D$ consists of 20 unit-norm concepts organized into a two-level tree: 11 disjoint parent concepts $\D_p$, 3 of which each have 3 disjoint children $\D_c$ as depicted in Fig.~\ref{fig:tree}. Each input is generated by sampling a parent and, if present, a child, resulting in one or two active concepts.
\[
\bm{x} = z_p \cdot\bm{D}_p + z_c \cdot \bm{D}_c
\]
The activations $z_p$ and $z_c$ follow the activation pattern defined in Definition~\ref{def:generative_hierarchical_process}, taking values from $\mathcal{N}(1.5, \sfrac{1}{4^2})$ (ensuring positive values) when active and $0$ otherwise.\footnote{We generalize the generative process from Bussmann et al.~\cite{bussmann2025learning}, where $z_p$ and $z_c$ are fixed across all inputs, making their firing magnitudes perfectly correlated ($z_p = \lambda z_c$). See Appendix~\ref{sec:variance} for an extended discussion.}

\begin{wrapfigure}[27]{R}{0.65\textwidth}
\vspace{-20pt}
  \begin{center}
    \includegraphics[width=\linewidth]{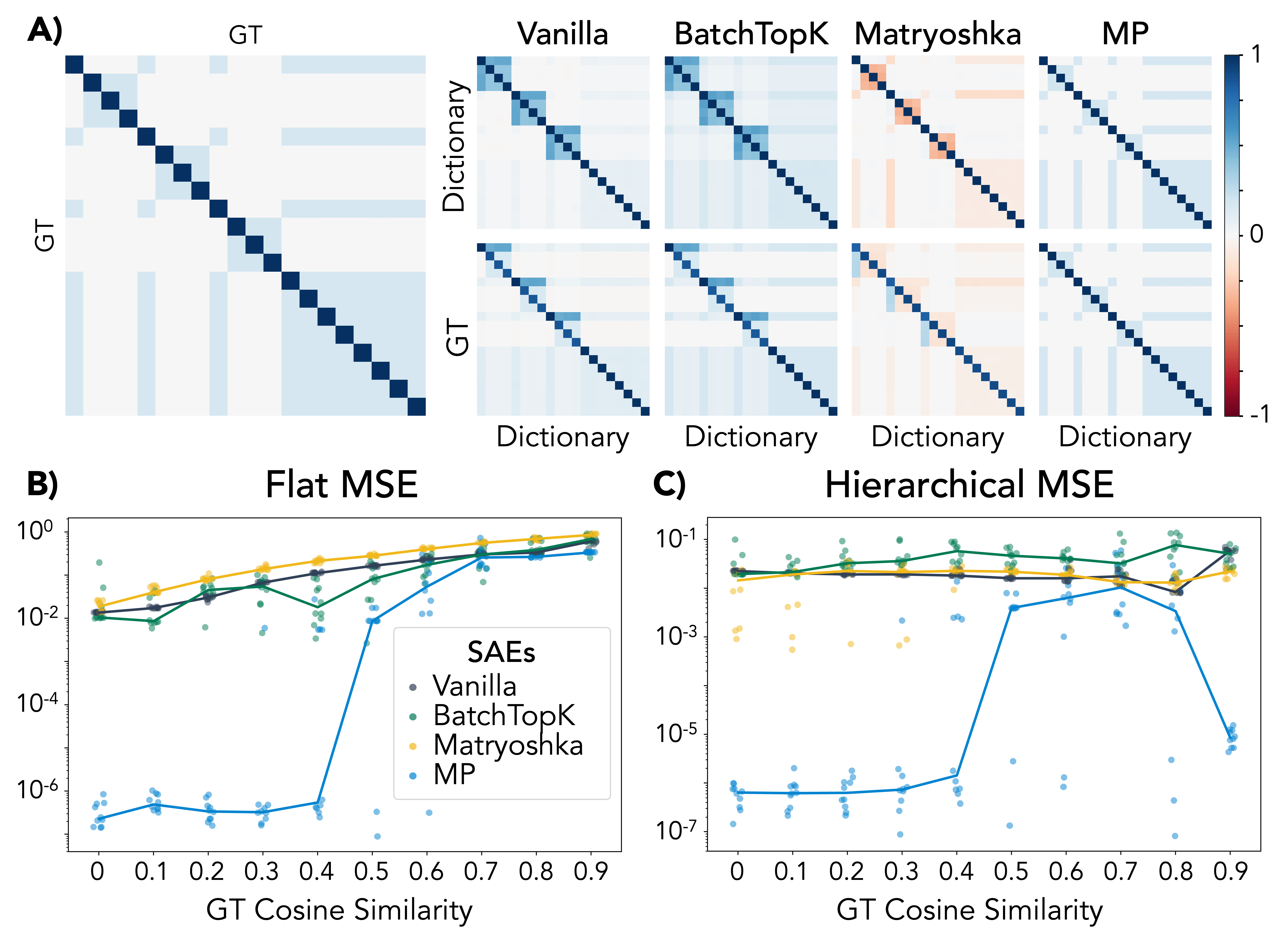}
  \end{center}
  \vspace{-10pt}
\caption{\label{fig:matryoshka}
\textbf{Evaluating SAE on a hierarchical tree with controlled within-level similarity.}
    \textbf{A)} Correlation matrices for one similarity setting. Left shows the ground-truth matrix; the top row displays $\D^\top \D$ (self-similarity of learned features) and bottom row shows $\D_\text{GT}^\top \D$ (alignment with ground truth). 
\textbf{Bottom:} Quantitative evaluation across varying levels of within-group correlation, median over 10 runs is reported. 
\textbf{B)} Flat MSE captures the deviation from the ground-truth intra-level correlation. 
\textbf{C)} Hierarchical MSE quantifies unintended correlations across levels.}
\end{wrapfigure}

We analyze Vanilla (ReLU)~\cite{cunningham_sparse_2023,bricken2023monosemanticity}, BatchTopK~\cite{bussmann2024batchtopk}, and Matryoshka~\cite{bussmann2025learning,zaigrajew_interpreting_2025} all trained with fixed $\ell_0$ sparsity targets. In a fixed low intra-level interference regime (Fig.~\ref{fig:matryoshka}A), Vanilla and BatchTopK suffer from \textit{feature absorption}~\cite{chanin_is_2024}, aligning child concepts with their parent and collapsing hierarchy—though they retain within-level corelation. Matryoshka avoids absorption and preserves hierarchy but introduces \textit{negative interference} between siblings, distorting flat structure. Only MP-SAE recovers both intra and inter level structure. 
To further stress-test the different SAEs, we vary within-level correlation in the ground truth and evaluate learned dictionaries using ``Flat MSE'' (for intra-level alignment; see Fig.~\ref{fig:matryoshka}B) and ``Hierarchical MSE'' (for inter-level separation; see Fig.~\ref{fig:matryoshka}C). 
We observe that Matryoshka tends to preserves hierarchy but loses flat structure; meanwhile, Vanilla and BatchTopK behave oppositely. 
MP-SAE exhibits three regimes: low to moderate interference yields recovery of both structures; under high interference, both degrade, reaching eventually a point where we see MP-SAE sacrifice flatness to maintain hierarchy, highlighting its inductive bias.

Overall, this synthetic benchmark elicits challenges towards capturing hierarchical features via existing SAEs, including ones trained with explicit hierarchy-promoting objectives.
Instead, capturing the relevant inductive bias via MP-SAE works really well: MP-SAE  is uniquely capable of recovering hierarchically structured features when such property statistically exists.
We now turn to pretrained representations to assess whether such structure arises in practice and how SAEs respond to it.

\subsection{Representations from Pretrained Models}
\label{subsec:natural}

We analyze representations from pretrained vision-language models in the following experiments.
This helps us avoid the challenge of flexibility of model representations in solely language-driven domains~\cite{park_iclr_2025} (though see Sec.~\ref{sec:llm} for preliminary results)
Our evaluation is organized in four parts. 
We begin by (\textbf{\textit{i}}) assessing MP-SAE’s expressivity relative to existing SAEs. We then (\textbf{\textit{ii}}) analyze the structure of its learned representations through effective rank and coherence metrics. Next, (\textbf{\textit{iii}}) we test its robustness to inference-time sparsity variation. Finally, (\textbf{\textit{iv}}) we investigate its ability to uncover multimodal features in vision-language models that remain inaccessible to existing SAEs.

\paragraph{Expressivity.}
We begin by evaluating the expressivity of different SAEs, including MP-SAE. Training was performed for 50 epochs with Adam, using a learning rate of $5\cdot10^{-4}$ and cosine decay to $10^{-6}$ with warmup. Models were trained on IN1K~\cite{imagenet2009} train set, using frozen representations from the final layer of each backbone. For ViT-style models (e.g., DINOv2), all spatial tokens and the CLS token were included ($\sim$261 tokens per image for DinoV2, which results in approximately 25 billion training tokens for a training). 

\begin{figure}[t]
    \centering
    \includegraphics[width=\linewidth]{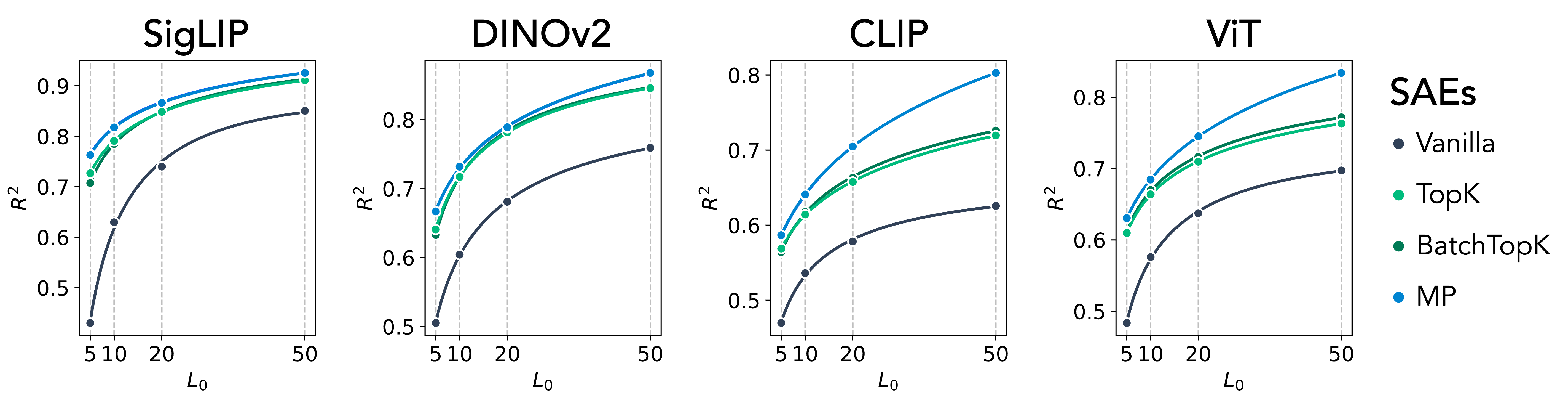}
    \vspace{-2mm}
     \caption{
        \textbf{MP-SAE recovers more expressive features than standard SAEs.} 
        Reconstruction performance ($R^2$) as a function of sparsity level across four pretrained vision models: SigLIP, DINOv2, CLIP, and ViT. 
        MP-SAE consistently achieves higher $R^2$ at comparable sparsity, indicating more efficient and informative decompositions. 
    }
    \label{fig:expressivity}
    \vspace{-3mm}
\end{figure}

Results are shown in Fig.~\ref{fig:expressivity}, where we plot the Pareto frontier obtained by varying the sparsity level, using an expansion factor of 25 ($p = 25m$) for all SAEs. Across all tested models: SigLIP~\cite{zhai2023sigmoid}, DINOv2~\cite{oquab2023dinov2}, CLIP~\cite{radford2021learning}, and ViT~\cite{dosovitskiy2020image}, MP-SAE consistently achieves higher $R^2$ at comparable sparsity levels, indicating more efficient reconstruction.
Our results above suggest that MP-SAE can explain a larger fraction of the representation space using fewer active features. Equivalently, its selected features are more informative per unit of sparsity. This aligns with the hypothesis that MP-SAE, through its iterative and residual-guided inference, can recover features that remain inaccessible to conventional SAEs.
We note that, Engels et al.~\cite{engels_decomposing_2025} identify a class of non-linearly accessible features that can't be recovered by linear sparse encoders. The improved expressivity observed here provides indirect evidence that MP-SAE may capture some of these otherwise hidden components. In the sections that follow, we examine more precisely the structure and semantics of the features recovered by MP-SAE to further probe this possibility.

\begin{wrapfigure}[18]{R}{0.35\textwidth}
\vspace{-13mm}
  \begin{center}
    \includegraphics[width=\linewidth]{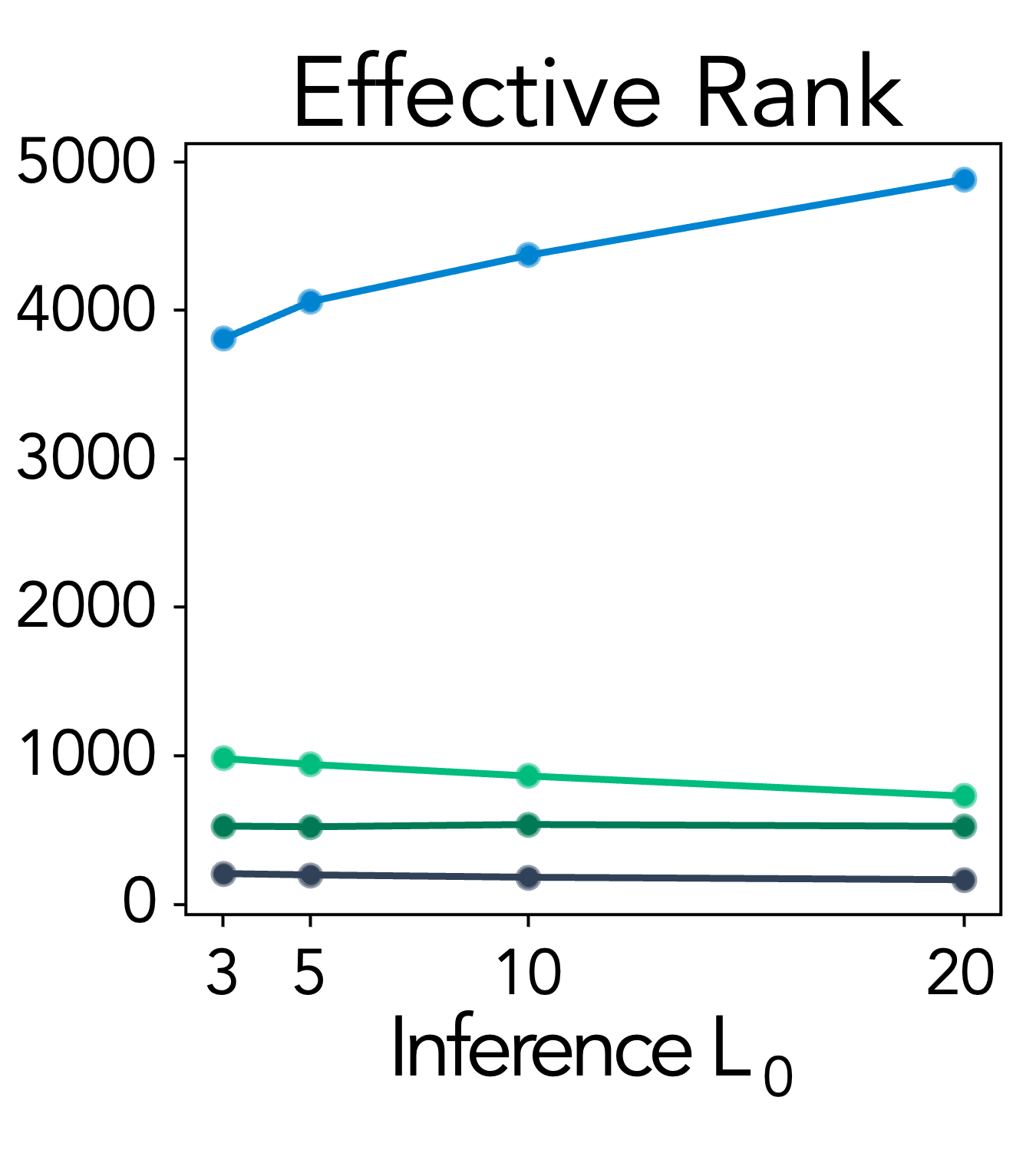}
  \end{center}
   \vspace{-6mm}
  \caption{\label{fig:rank}\textbf{Growth of effective rank as a function of sparsity $k$.} MP-SAE exhibits increasing combinatorial diversity, unlike standard SAEs whose co-activation structure quickly saturates.}
  \vspace{20mm}
\end{wrapfigure}

\paragraph{Emergence of Rank Structure.}
To investigate how MP-SAE organizes features across varying sparsity levels, we analyze the effective rank of the co-activation matrix $\Z^\top \Z$, where $\Z \in \mathbb{R}^{n \times p}$ is the matrix of sparse codes across $n$ inputs (see Fig.~\ref{fig:rank}). Each entry of $\Z^\top \Z$ captures how frequently two concepts are co-activated. The effective rank, defined as the exponential of the entropy of the normalized eigenvalues of $\Z^\top \Z$ (Eq.\ref{eq:effective_rank}, Appendix \ref{app:coherence}), measures the diversity of feature co-activation patterns across inputs.
For standard SAEs, increasing the sparsity level $k$ typically results in limited structural change: the encoder reuses similar subsets of features, leading to saturated rank and strong diagonal blocks in $\Z^\top \Z$. In contrast, MP-SAE shows a markedly different trend. As $k$ increases, the effective rank grows steadily, reflecting a continual diversification of active feature sets.
The rank growth induced by MP-SAE is thus \textbf{not an artifact of increased capacity, but a structural signal}: it reflects the model’s ability to discover and disentangle latent modularity within representations.

\begin{figure}[t]
    \centering    
    \vspace{-3mm}
    \includegraphics[width=\linewidth]{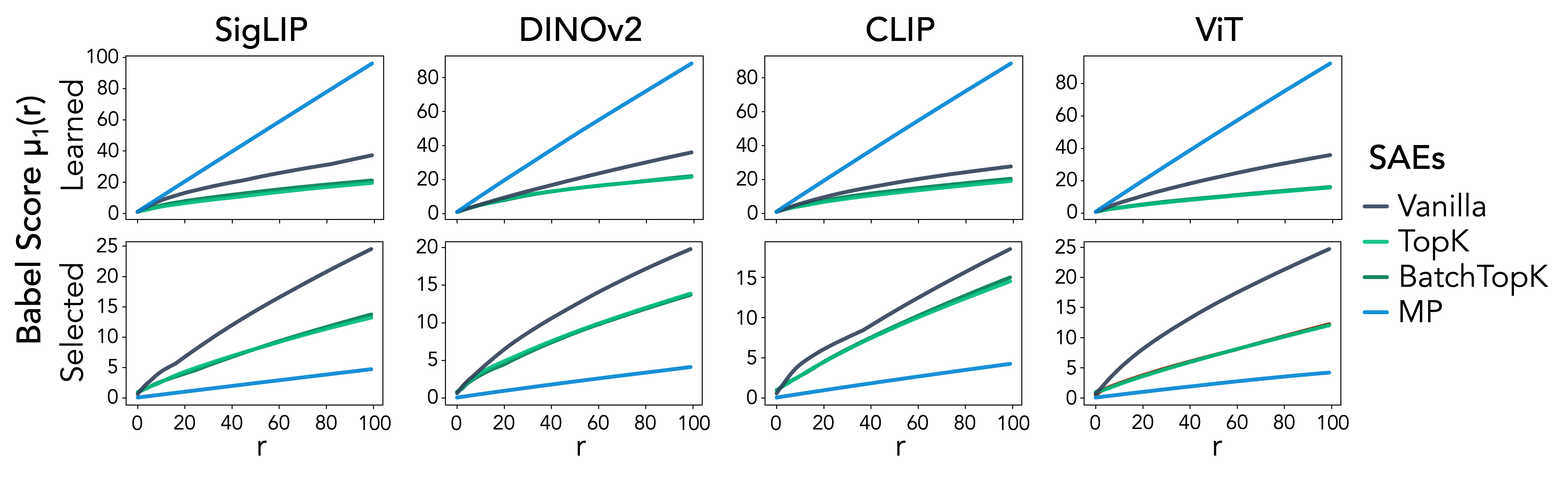}
    \vspace{-6mm}
    \caption{\textbf{MP-SAE promotes conditional orthogonality at inference.} Babel scores for full dictionaries (top) and co-activated subsets (bottom). MP-SAE dictionaries exhibit higher global coherence than standard SAEs, but select more separated features at inference.}
    \label{fig:babel}
    \vspace{-3mm}
\end{figure}

\vspace{-2mm}
\paragraph{Coherence and Conditional Orthogonality.}
We now examine the internal organization of the learned dictionaries by quantifying their coherence. Specifically, we use the Babel function~\cite{tropp_greed_2004}, a standard metric in sparse approximation that captures cumulative interference between features. Given a dictionary $\D = [\D_1, \dots, \D_p] \in \mathbb{R}^{m \times p}$ with unit-norm columns, the Babel function of order $r$ is defined as:
\[
\mu_1(r) = \max_{\substack{S \subset [p] \\ |S| = r}} \left( \max_{j \notin S} \sum_{i \in S} \left| \D_i^\top \D_j \right| \right)
\]
Intuitively, $\mu_1(r)$ reflects how well a single concept can be approximated by a group of $r$ others; lower values indicate better separability. Fig.~\ref{fig:babel} reports $\mu_1(r)$ for the learned dictionary, as well as the average over multiple representations for subsets of co-selected concepts at inference time. Interestingly, MP-SAE learns dictionaries with higher overall Babel scores than standard SAEs, indicating greater global interference. However, the concepts it selects during inference exhibit lower Babel scores, reflecting MP-SAE's tendency to construct conditionally orthogonal representations at inference (even when the full dictionary is correlated).
By contrast, linear SAEs enforce global quasi-orthogonality in the dictionary but do not control which features co-activate at inference. As a result, inference often selects interfering directions despite a well-structured dictionary.

\begin{wrapfigure}[17]{r}{0.55\textwidth}
\vspace{-5mm}
  \begin{center}
    \includegraphics[width=\linewidth]{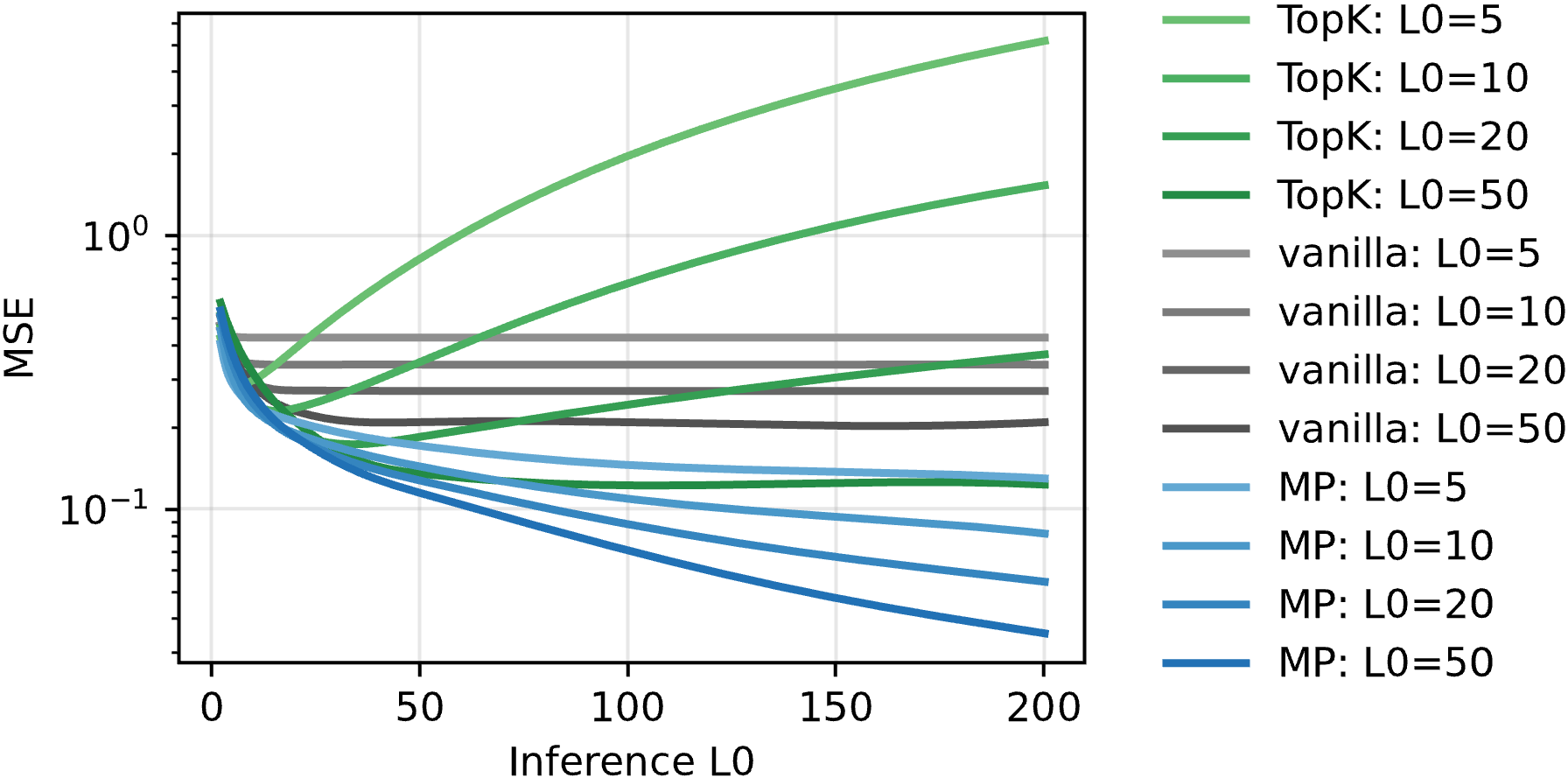}
  \end{center}
   \vspace{-4pt}
  \caption{\label{fig:inference_at_k}\textbf{Reconstruction error as a function of inference-time sparsity $k$.} When increasing SAEs' sparsity at inference on DINOv2 representations, we see MP-SAE exhibits monotonic improvement, while TopK SAEs may degrade due to sparsity mismatch. Similar results emerge in other settings (see Appendix~\ref{sec:llm}).}
 \vspace{-10mm}
\end{wrapfigure}

\paragraph{Adaptive Inference-Time Sparsity.}
A key property of MP-SAE is its ability to adaptively adjust the number of selected features $k$ at inference time without retraining. Because inference proceeds via residual-based greedy selection, each additional step guarantees non-increasing reconstruction error. As shown in Fig.~\ref{fig:inference_at_k}, MP-SAE is the only architecture for which reconstruction fidelity improves monotonically with $k$ across all tested architectures and representation types.
This stands in contrast to TopK SAEs, which degrade under sparsity mismatch: when trained with fixed $k$, the decoder implicitly specializes to superpositions of \textit{exactly} $k$ features, resulting in instability when $k$ changes.

ReLU-based SAEs, by contrast, cannot extend their support beyond what was active during training and exhibit flat or plateaued performance as $k$ increases. 
This robustness is particularly salient given the epistemic uncertainty surrounding the ``true sparsity'' of neural representations. This property allows adjusting $k$ at inference time without retraining, enabling controlled trade-offs between sparsity and reconstruction quality. Rather than fixing $k$, one can target a desired reconstruction level and let the model incrementally reveal the relevant features by adapting $k$.

\paragraph{Recovering Multimodal Concepts.}
We evaluate whether MP-SAE can recover shared concepts from vision-language models (VLMs). Our analysis focuses on representations from CLIP~\cite{radford2021learning}, with consistent results observed for AIMv2~\cite{fini2024multimodal}, SigLIP~\cite{zhai2023sigmoid}, and SigLIP2\cite{tschannen2025siglip2} (See Appendix \ref{sec:lvm}). All models are evaluated on the COCO dataset~\cite{lin2014microsoft}, using both image and caption embeddings, balanced to ensure equal sampling across modalities.
When trained on these joint embedding spaces, classical SAEs frequently learn \emph{split dictionaries}~\cite{parekh2024concept, bhalla2024towards}, where distinct features respond exclusively to either visual or textual inputs. This occurs despite the alignment of the underlying representation space and reflects a structural limitation in how these existing SAEs extract shared features. To quantify modality selectivity, we use the Modality Score from~\cite{papadimitriou2025interpreting}, defined for concept $i$ as:
\[
\text{ModalityScore}_i = \frac{\mathbb{E}_{\z \sim \iota}(z_i)}{\mathbb{E}_{\z \sim \iota}(z_i) + \mathbb{E}_{\z \sim \tau}(z_i)},
\]
where $\iota$ and $\tau$ denote activations over image and text inputs, respectively. Scores near 1 indicate image specificity; near 0, text specificity; and intermediate values reflect balanced, multimodal activation.

\begin{wrapfigure}[20]{R}{0.65\textwidth}
  \vspace{-8pt}
  \begin{center}
    \includegraphics[width=\linewidth]{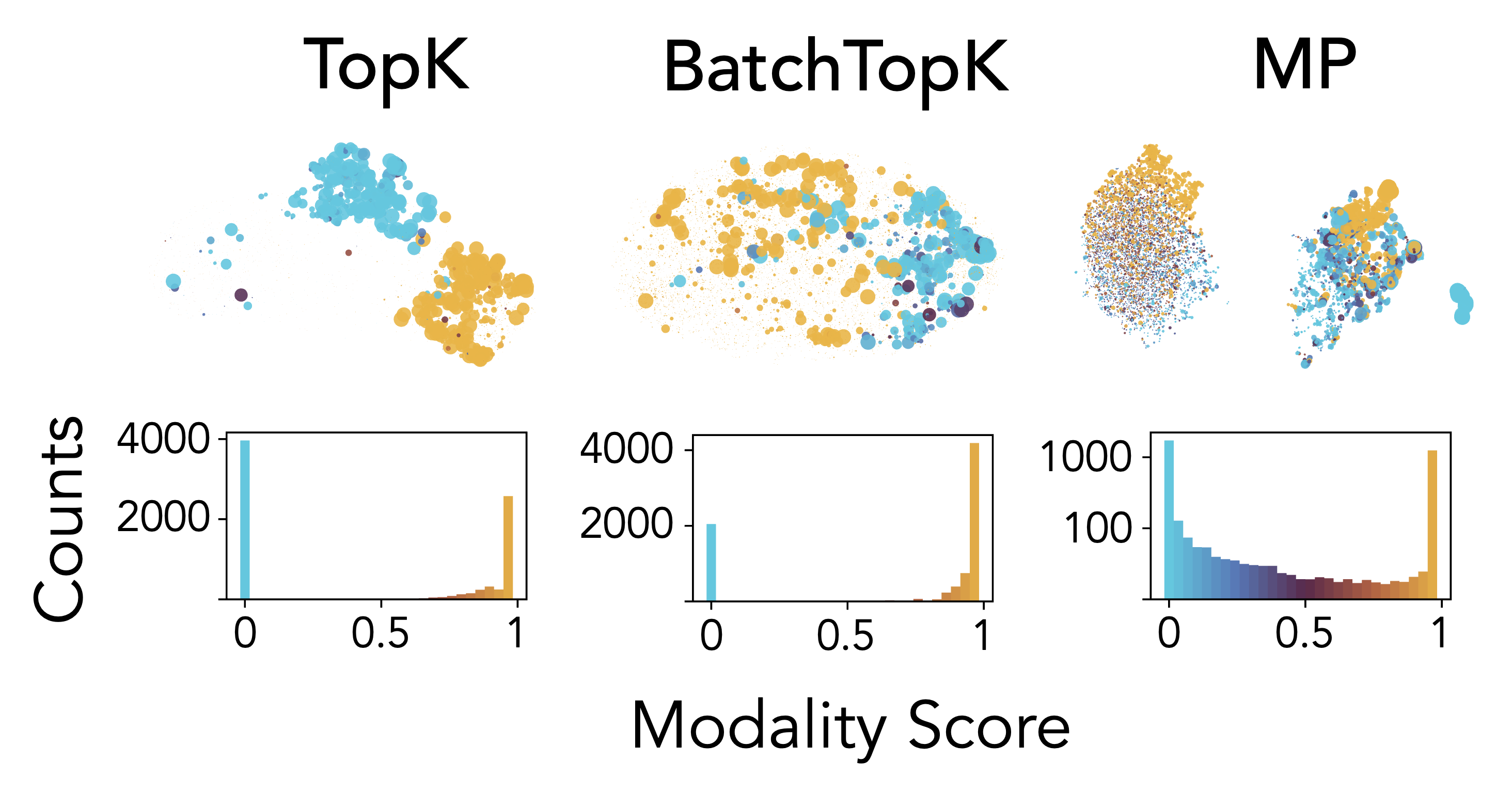}
  \end{center}
  \vspace{-11pt}
  \caption{\label{fig:modality_score}\textbf{MP-SAE identifies shared structures across modalities.} \textbf{Top:} UMAP of sparse codes (yellow denotes image representations; blue denotes text). \textbf{Bottom:} Distribution of modality scores. We see existing SAEs yield features skewed towards a specific modality, suggesting bimodal structure in model representations. However, MP-SAE uniquely recovers multimodal units with shared activation across text and vision inputs.}
  \vspace{-10mm}
\end{wrapfigure}
Consistent with prior work~\cite{papadimitriou2025interpreting}, we find that standard SAEs yield sharply bimodal modality score distributions, confirming their tendency to separate modalities. In contrast, MP-SAE yields a significantly flatter distribution with substantial mass in the midrange (see Fig.~\ref{fig:modality_score}), suggesting that it recovers genuinely multimodal units responsive to both modalities. This ability to extract multimodal concepts appears unique to MP-SAE.
We hypothesize that MP-SAE’s residual based inference plays a central role: once modality-specific information is explained, the residual shifts toward shared structure, allowing subsequent steps to capture cross-modal features (See Fig.~\ref{fig:modality_gap}). Prior work~\cite{bhalla2024towards} has attempted to bridge the modality gap by applying a learned translation from one modality to the other. This can be seen as equivalent to a single inference step in MP-SAE. However, MP-SAE continues this process iteratively, progressively refining the residual and revealing joint semantic structure. This iterative mechanism enables MP-SAE to extract hierarchical and non-linearly accessible multimodal features beyond the scope of conventional SAEs.

\section{Conclusion}

In this work, we revisit SAEs to understand their ability to identify concepts encoded in a manner outside the scope of their motivating hypothesis, i.e., LRH. 
Our results show that while standard SAEs are indeed effective under LRH, they struggle to capture more complex representational structures---such as hierarchical, nonlinearly accessible, and multimodal features---that have emerged in recent studies of large neural networks. 
To contextualize these results with respect to an SAE that is motivated to (partially) accommodate such phenomenology outside LRH's scope, we introduced MP-SAE. 
Specifically, an extension of the classical Matching Pursuit algorithm, MP-SAE promotes conditional orthogonality through residual-guided inference. 
Our results show that this design enables MP-SAE to recover rich, hierarchically organized, and nonlinearly accessible features that standard SAEs missed.
Beyond interpretability, we also find that residual-based, sequential encoders like MP-SAE offer practical advantages, such as adaptive inference-time sparsity and progressive feature discovery, which may be of independent interest to the community.
\textit{Overall, we argue our results highlight the limitations of assuming purely linear structure in neural representations.}

\paragraph{Limitations} We emphasize that our goal with proposing MP-SAE was not to propose a ``superior'' SAE architecture, but to use it as a tool whose inductive bias aligns with a particular class of concept structures, namely, hierarchically and nonlinearly accessible features. Under this hypothesis, MP-SAE proves more expressive, more modular, and more robust than standard SAEs.
However, we note MP-SAE assumes that representations can be incrementally explained by conditionally orthogonal features. This inductive bias may be poorly matched in flat or entangled regimes, or in settings with categorical structure. Matching Pursuit is also a greedy algorithm, which lacks global optimality and may be brittle under extreme noise (though we do find existing SAEs fail in this regime as well).

\section*{Acknowledgements}
Authors thank the CRISP Group at Harvard SEAS for insightful conversations, and the Kempner Institute and CBS-NTT program in Physics of Intelligence at Harvard University for access to compute resources used for performing experiments reported in this paper. Valérie Costa thanks the Bertarelli Foundation for supporting her work as a fellow.

\newpage
\bibliographystyle{unsrt}
\bibliography{references}

\appendix
\newpage
\addcontentsline{toc}{section}{Appendix}
{%
  \hypersetup{linkcolor=myblue}%
  \part{Appendix}%
  \parttoc%
}

\newpage

\section{Related Work}\label{app:related work}
\vspace{-1mm}

\textbf{Sparse coding}\quad Sparse linear generative models aim to find a sparse representations $\z \in \R^{p}$ that explains the data $\x \in \R^{m}$ via a collection of atoms, forming an overcomplete dictionary $\D \in \R^{m \times p}$ ($p \gg m$.)~\cite{olshausen1996emergence, olshausen1997sparse}. Sparse representations are ubiquitous in science and engineering, with origins from computational neuroscience~\cite{olshausen1996emergence,olshausen1997sparse} and applications in medical imaging \cite{lustig_sparse_2007, hamalainen_sparse_2013, fang_fast_2013}, image restorations \cite{mairal2007sparse, mairal2009non, dong_nonlocally_2013}, radar sensing \cite{potter2010sparsity}, transcriptomics ~\cite{cleary2017trans, cleary2021compressed}, and genomics \cite{do_sparse_2006, witten_extensions_2009}.

When the dictionary is fixed, finding the sparsest $\z$ decomposition of the input $\x$ is known as a sparse approximation problem (i.e., minimizing the $\ell_0$ norm of the representation)~\cite{natarajan_sparse_1995,tropp_greed_2004, crespo_marques_review_2019}. This combinatorial sparse problem is NP-hard and has been the focus of extensive research in compressed sensing~\cite {donoho2006compressed, candes2006robust, candes2008introduction, lopes2013estimating} and signal processing~\cite{elad_sparse_2010, aharon2006ksvd}. Classical approaches include greedy $\ell_0$-based algorithms such as Matching Pursuit~\cite{MallatMP} and Orthogonal Matching Pursuit~\cite{pati1993omp}, as well as the convex relaxation $\ell_1$-based methods \cite{chen2001atomic, tibshirani1996lasso, tropp_just_2006, hastie2015statistical} such as Iterative Soft-Thresholding Algorithm (ISTA)~\cite{daubehies2004ista,blumensath2008ista} and Iterative Hard-Thresholding (IHT)~\cite{blumensath2009iterative}. Since sparse recovery lacks a closed-form solution \cite{natarajan_sparse_1995}, the sparse approximation step typically requires multiple residual-based iterations to converge.

When the dictionary is learned jointly with the sparse codes, the problem is referred to as sparse coding~\cite{olshausen1997sparse} or dictionary learning \cite{rubinstein_dictionaries_2010,elad_sparse_2010,tosic_dictionary_2011}. Classical approaches such as MOD~\cite{engan1999mod} and K-SVD~\cite{aharon2006ksvd} solve the problem using alternating minimization~\cite{chatterji2017alternating}. The bottleneck of sparse coding is in the convergence rate of the inner problem, which relates to the properties on dictionary atoms; while the slow sublinear convergence of inner iterative algorithms such as ISTA~\cite{daubehies2004ista, blumensath2008ista} can be accelerated via optimization techniques (e.g., adding momentum~\cite{beck2009fast}, there exist no optimization algorithm that can guarantee linear convergence for general sparse coding problem.

\textbf{Unrolling for sparse coding}\quad To address the slow convergence of sparse coding, in 2010, learned ISTA (LISTA)~\cite{gregor2010lista} sparked a new approach, which is now known as unrolling, to turn and relax the iterations of ISTA into layers of a neural network to accelerate the inner problem. Since then, numerous works have theoretically studied the linear convergence of unrolled ISTA~\cite{chen2018unfoldista, ablin2019stepsize}, and some others have highlighted the implicit acceleration that this approach may bring in learning the dictionary~\cite{tolooshams2022stable, malezieux2022understanding}. Moreover, a parallel literature has theoretically shown that approximate sparse coding, such as a shallow ReLU network~\cite{rangamani2018sparseae, nguyen2019dynamics}, or shallow hard-thresholding (recently named as JumpReLU~\cite{rajamanoharan2024jumping})~\cite{arora2015sparsecoding, nguyen2019dynamics} is practically enough to perform dictionary learning.

\textbf{Interpretability of Neural Networks}\quad Recently, ~\citep{fel_holistic_2023} has shown that many interpretability methods used to extract concepts were in fact sparse coding methods. The field has re-emerged as a compelling framework for interpreting the internal mechanisms of high-performing, large-scale models~\cite{fel_craft_2023,openai2024gpt4technicalreport, o3modelcard, team2023gemini, guo2025deepseek, liu2024deepseek, ravi2024sam, kirillov2023segment, sora, peebles2023scalable, deitke2024molmo}, particularly in light of persistent challenges surrounding safety and interpretability~\cite{doshi2017towards, anwar2024foundational, tripicchio2020deep}. Within this renewed interest, Sparse Autoencoders (SAEs) have gained prominence as a principled method for exposing the latent structure of neural representations~\cite{bricken2023monosemanticity, gao2025scaling, fel2025archetypal, bussmann2024batchtopk, rajamanoharan2024jumping, rajamanoharan_improving_2024, bussmann2025learning, hindupur2025projecting, templeton2024scaling, cunningham_sparse_2023}.

A significant portion of recent interpretability research is underpinned by the Linear Representation Hypothesis (LRH)~\cite{elhage2022toymodelssuperposition, park2023linear, park2023linear}, which posits that high-dimensional neural representations can be decomposed as superpositions over a large set of approximately orthogonal directions, each aligned with human-interpretable concepts. This hypothesis has been theoretically substantiated through formal complexity bounds and constructive frameworks for linearly decodable computation in superposition~~\cite{hanni2024mathematical, adler2025towards, adler2024complexity}.

Empirical support for LRH spans both vision and language domains. In vision models, internal representations have been shown to encode factors such as geometry, lighting, facial structure, and scene composition~\cite{chen2023beyond, bau2018gan, bau2020rewriting, abdal2021labels4free, du2023generative, voynov2020unsupervised, shen2020interpreting, gandelsman2023interpreting, merullo2022linearly, kowal2024understanding, olah_zoom_2020, olah2017feature, olah2020naturally, olah2020overview, cammarata2020curve, schubert2021high, fel2024understanding}. In language models, analogous patterns emerge, with latent directions capturing syntactic and semantic content~\cite{li2023inference, marks2023geometry, lee2024mechanistic, tigges2023linear, maheswaranathan2019reverse, maheswaranathan2019line}, character roles, arithmetic concepts~\cite{zhu2024language, nanda_emergent_2023, merullo2023language}, relational structures~\cite{hernandez2023linearity, merullo2025linear, arora_latent_2016, arora2018linear}, and even safety-related behaviors such as refusal to generate harmful outputs~\cite{arditi2024refusal, jain2024makes}.

However, recent work has challenged the universality of LRH, arguing that the geometric assumptions behind SAEs may not hold in practice~\cite{wattenberg_relational_2024, smith_strong_2024}. Several studies reveal nonlinear and non-directional structures: hierarchical concepts may align with orthogonal axes, but categorical ones form overlapping polytopes~\cite{park2024geometry}; RNNs exhibit “onion-like” layers~\cite{csordas_recurrent_2024}; and temporal concepts like weekdays often require multi-dimensional, context-sensitive representations~\cite{engels_decomposing_2025, engels_not_2025, park_iclr_2025,hindupur2025projecting}. These findings highlight the need for improved SAE architectures to capture interpretable concepts that lie beyond the assumptions of the LRH.

\section{Extended Experimental Details and Further Results}\label{app:training_details}

\subsection{Synthetic Experiment}\label{sec:synthetic}
The synthetic experiments were conducted using the codebase provided by Bussmann et al.~\cite{bussmann2025learning}. Specifically, we introduced the following modifications:
\begin{itemize}
\item Extended the framework to support both MP-SAE and BatchTopK variants.
\item Enforced mutual exclusivity among child features, yielding a sparsity of 1 or 2.
\item Introduced variance in the firing magnitudes of the features (see Section~\ref{sec:variance}).
\item Added controlled intra-level correlations between features.
\end{itemize}
Before detailing our experimental setup, we first introduce the concept of Matryoshka Sparse Autoencoders (SAEs).

\paragraph{Matryoshka Sparse Autoencoders (SAEs)}

Matryoshka SAEs~\cite{bussmann2025learning,zaigrajew_interpreting_2025} extend traditional sparse autoencoders by enforcing accurate reconstruction from multiple nested prefixes of the latent vector. Given an input $\x \in \mathbb{R}^m$, the encoder computes $\z = \bm{\Pi}\{\W^{\top} (\bm x - \bm b_{\text{pre}}) + \bm b\} \in \mathbb{R}^p$, and each prefix $m_i \in \mathcal{M}$ is used to reconstruct $\x$ via the first $m_i$ latents:

$$
\hat{\x}^{(m_i)} = \D_{0:m_i,:} \z_{0:m_i} + \smlb_{\text{pre}}.
$$

The set $\mathcal{M} = {m_1, m_2, \dots, m_n}$ defines a hierarchy of prefix lengths such that $m_1 < m_2 < \dots < m_n = p$, where each $m_i$ corresponds to a sub-SAE tasked with reconstructing the input from a progressively longer portion of the latent code. In practice, $\mathcal{M}$ can be fixed before training or sampled stochastically per batch to encourage robustness across multiple levels of abstraction.

The total loss encourages reconstructions at multiple levels:

$$
\mathcal{L}(\x) = \sum_{m_i \in \mathcal{M}} \| \x - \hat{\x}^{(m_i)} \|_2^2 + \alpha \mathcal{L}_{\text{aux}}.
$$

This design encourages early latents to encode general features and later ones to specialize, improving disentanglement and reducing feature absorption~\cite{chanin_is_2024} in hierarchical settings.

\subsubsection{Dataset Generation}\label{app:data_gen}

\begin{figure}[h]
    \centering
    \includegraphics[width=\linewidth]{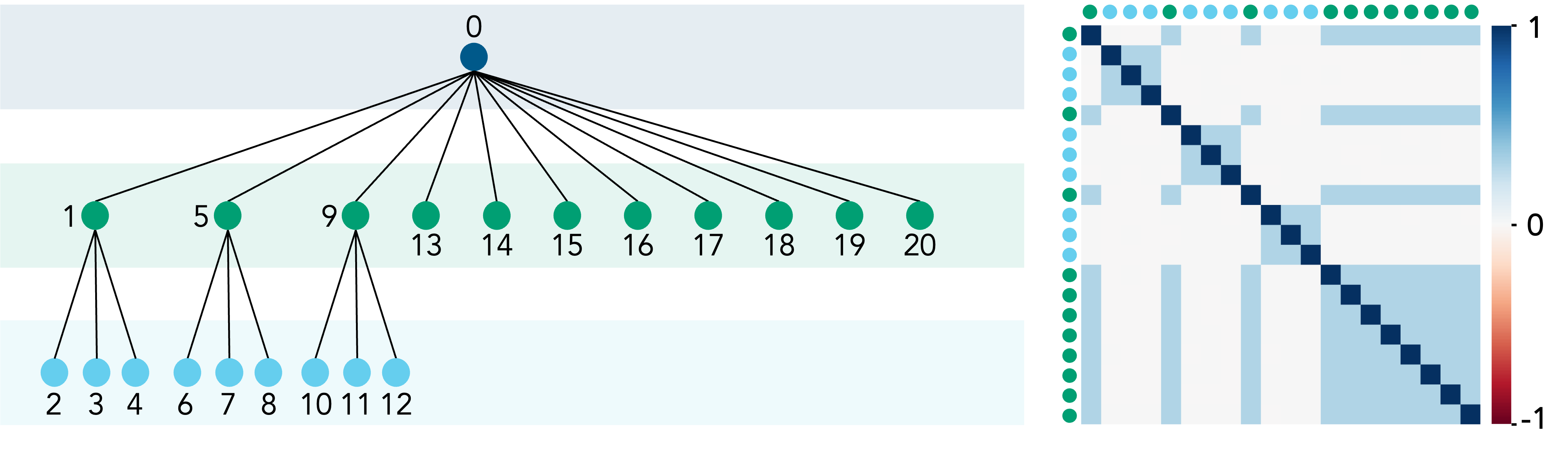}
    \caption{Hierarchical tree structure (from \cite{bussmann2025learning}, also shown in Figure~\ref{fig:matryoshka}) with node indices. Indices are omitted in subsequent figures to avoid visual clutter.}
    \label{fig:indices}
\end{figure}

The tree structure contains 20 nodes, each corresponding to a unit-norm vector in $\mathbb{R}^{20}$, as illustrated in Figure~\ref{fig:tree} and detailed in Figure \ref{fig:indices}. The exact tree configuration used in our experiments is available on the project’s GitHub repository\footnote{ \url{https://github.com/mpsae/MP-SAE}}. 

\paragraph{Feature Activations} The root node (index 0) is fixed to the zero vector, i.e., $0 \in \mathbb{R}^{20}$. All child nodes are mutually exclusive, leading to input sparsity levels of either 1 or 2. Leaf nodes—green parents without children (indices 13–20)—are independently activated with probability 0.05 ($\mathbb{P}[z_p > 0] = 0.05$). Internal green parent nodes (1, 5, and 9) are activated with probability 0.2 ($\mathbb{P}[z_p > 0] = 0.2$). Blue child nodes (2–4, 6–8, 10–12) are activated with probability 0.2 ($\mathbb{P}[z_c > 0 \mid z_p > 0] = 0.2$), conditional on their corresponding parent (1, 5, or 9) being active, in line with Definition~\ref{def:generative_hierarchical_process}. This results in an expected target sparsity of $\ell_0 = 1.36$.

\paragraph{Activation Magnitudes} 
If a node is active, its sparse code is drawn from a Gaussian distribution $z \sim \mathcal{N}(1.5, \sfrac{1}{4^2})$. This differs from the generative process of Bussmann et al.~\cite{bussmann2025learning}, where $z_p$ and $z_c$ are fixed across all inputs, making their firing magnitudes perfectly correlated ($z_p = \lambda z_c$). We provide a justification for this design choice in Section~\ref{sec:variance}. The input $\bm{x}$ is then constructed as the sum of the active node vectors scaled by their corresponding code values:
\[
\bm{x} = z_p \cdot \bm{D}_p + z_c \cdot \bm{D}_c.
\]

\paragraph{Introducing Intra-Level Correlations.} 
In the original codebase, all dictionary directions are orthogonal, obtained via QR decomposition. To introduce correlations specifically between nodes at the same hierarchical level, we perturb each dictionary element as
\[
\D_i \leftarrow (1 - \varepsilon)\D_i + \varepsilon \sum_{j \neq i} \D_j,
\]
where the sum is restricted to elements sharing the same parent, followed by renormalization. The parameter $\varepsilon$ is tuned empirically to achieve the desired degree of intra-level correlation.

\subsubsection{Training}
Our training pipeline follows the same procedure as that used by Bussmann et al.~\cite{bussmann2025learning}. Samples are drawn from the tree structure in batches of 200 over 15,000 training steps, using the Adam optimizer with $\beta = (0.5, 0.9375)$ and a learning rate of $3 \times 10^{-2}$. To stabilize optimization, gradient norms are clipped at 1.

For sparsity control, all SAEs, except MP-SAE, are trained to match a target $\ell_0$ sparsity of 1.36.
\begin{itemize}[leftmargin=5mm]
    \item Vanilla and Matryoshka SAEs begin with a low $\ell_1$ regularization during the first 3,000 steps, which is subsequently increased or decreased based on whether the observed sparsity is below or above the target.
    \item BatchTopK SAE is trained with a fixed sparsity of 3 for the initial 1,000 steps, after which the sparsity level is reduced to 1.36 for the remainder of training.
    \item MP-SAE, in contrast, does not need to rely on a fixed sparsity target. (make it sound more like an advantage)Instead, it unrolls the encoder until either the residual norm drops below 0.05 or the support of the sparse code $z$ stabilizes—avoiding infinite loops when the residual cannot be sufficiently reduced. Unlike the other SAEs, MP-SAE uses tied weights.
\end{itemize}

\subsubsection{Detailed Results on Synthetic Data}\label{sec:synthetic-results}

In this section, we provide detailed results from the synthetic experiments (Figure~\ref{fig:matryoshka}). Figure~\ref{fig:corr_gt} displays the ground truth correlation matrices used to construct dictionaries with varying levels of intra-level correlation; we focus here on four representative settings: 0, 0.3, 0.6, and 0.9.

All models are evaluated using a fixed sparse code $\z$, shown in Figure~\ref{fig:z_gt}, which depicts the ground truth activation matrix $\z \in \mathbb{R}^{N \times p}$. When varying the degree of intra-level correlation, Vanilla SAE and BatchTopK consistently exhibit feature absorption. Matryoshka SAE sometimes alleviates this issue but does so by suppressing intra-level correlations entirely. In contrast, MP-SAE reliably recovers both the hierarchical structure and intra-level correlations at low to moderate levels (0, 0.3) and at high correlation levels (0.9), it prioritizes preserving the hierarchy over matching correlation. We summarize our detailed observations for each correlation level below:

\begin{itemize}[leftmargin=5mm]
    \setlength\itemsep{0.2em}
    \item \textbf{Figure~\ref{fig:corr0} (Correlation = 0):} We observe feature absorption in both the Vanilla SAE and BatchTopK models. Specifically, features corresponding to child nodes are often aligned with those of their parent nodes, as evidenced by prominent horizontal structures in the second row of Figure~\ref{fig:corr0}. This indicates that the recovered child features (2--4,6--8,10--11) are not disentangled from their respective parents (1,5,9). While the Matryoshka SAE occasionally mitigates this issue (see rightmost runs in the Matryoshka column of Figure~\ref{fig:corr0}), it does so by introducing negative correlations between features at the same level, which can result in partial misalignment. In contrast, MP-SAE perfectly recovers the ground truth dictionary without exhibiting feature absorption. In terms of support recovery and sparse code estimation, all methods (excluding the failed runs of Matryoshka in rows 2 and 4) correctly identify the active set. However, in Vanilla SAE and BatchTopK, feature absorption leads to underestimation of parent activation values when a child is active. When successful, Matryoshka mitigates this issue. MP-SAE achieves exact recovery, with the inferred sparse codes $\hat{\z}$ matching the ground truth $\z$ both in support and magnitude.
    
    \item \textbf{Figure~\ref{fig:corr0.3} (Correlation = 0.3):} Feature absorption remains a consistent issue in both Vanilla SAE and BatchTopK. Nonetheless, these models partially reflect the ground truth intra-level correlations among the green parent nodes without children (13--20). Matryoshka SAE continues to exhibit inconsistent behavior: while it occasionally reduces feature absorption, it fails to consistently capture intra-level dependencies. In contrast, MP-SAE maintains perfect recovery, accurately reconstructing the sparse codes with $\hat{\z} = \z$ and showing no evidence of feature absorption.

    \item \textbf{Figure~\ref{fig:corr0.6} (Correlation = 0.6):} Vanilla SAE and BatchTopK partially capture intra-level correlations, with correlation values closest to the ground truth among the blue child nodes. However, these correlations remain weaker or less accurate for other node types. Matryoshka SAE continues to struggle, failing to recover meaningful intra-level dependencies. In contrast, MP-SAE achieves partial recovery of intra-level correlations—particularly among parent nodes without children—and more faithfully preserves the hierarchical organization of the dictionary through its inductive bias toward conditional orthogonality.

    \item \textbf{Figure~\ref{fig:corr0.9} Correlation = 0.9):} At high levels of intra-level correlation, all SAEs exhibit degraded performance in recovering features. Vanilla SAE and BatchTopK still capture some similarity among the last-layer children but tend to flatten the hierarchy by focusing on correlated low-level features while neglecting higher-level ones. MP-SAE, interestingly, separates each hierarchical level into two, resulting in four orthogonal sublevels composed of 1, 10, 3, and 6 features, respectively. Within each ground-truth level of correlated features, MP-SAE learns one anchor direction capturing the shared structure, and additional orthogonal directions that describe deviations from this anchor to reconstruct the remaining features, as observed in the correlation matrices $\D^\top \D$. Since it is not trained with a fixed target sparsity, this separation is achieved by increasing the support: most inputs now activate 2–4 components compared to the ground-truth sparsity of 1–2. This behavior preserves conditional structure even under strong entanglement—at the cost of reduced correlation—illustrating MP-SAE’s bias toward maintaining hierarchical organization when representations are highly coupled.

\end{itemize}
\begin{figure}[H]
     \centering
     \begin{subfigure}[b]{0.47\textwidth}
         \centering
         \includegraphics[width=\textwidth]{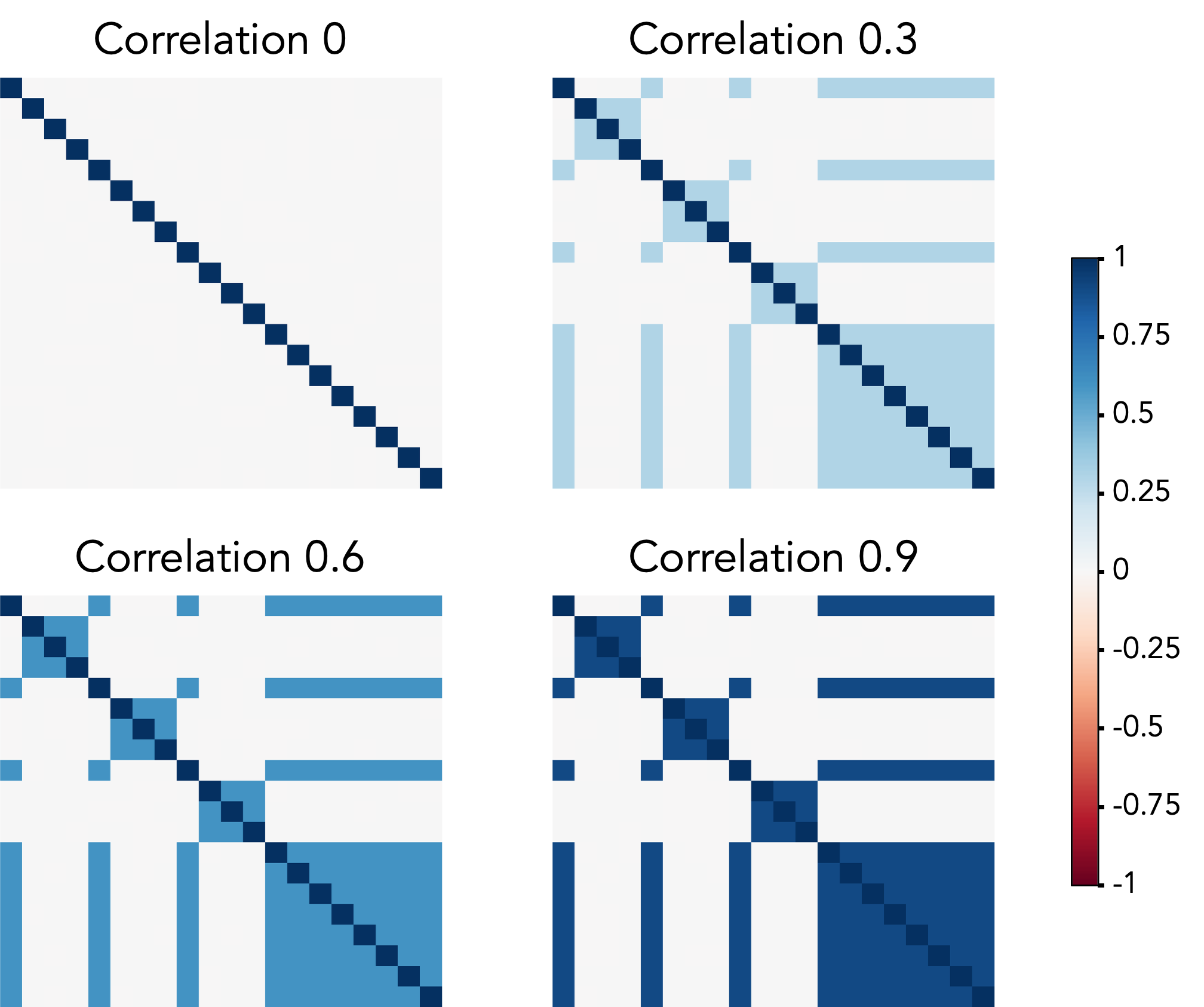}
         \caption{Ground truth correlation matrices used in the experiments, corresponding to four levels of intra-level correlation: 0, 0.3, 0.6, and 0.9.}
         \label{fig:corr_gt}
     \end{subfigure}
     \hfill
     \begin{subfigure}[b]{0.5\textwidth}
         \centering
         \includegraphics[width=\textwidth]{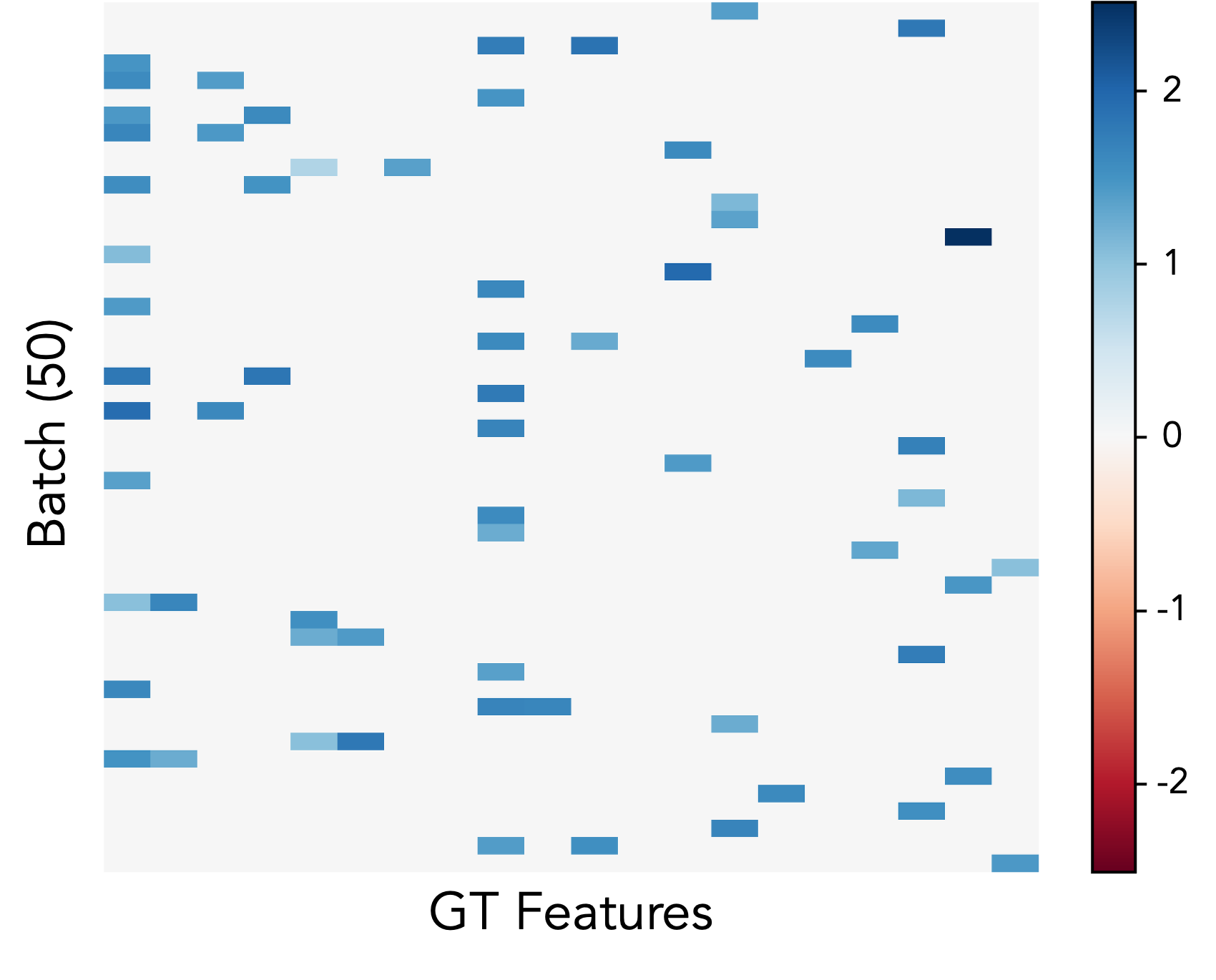}
         \caption{Ground truth reference activation matrix $\z$ used for generating synthetic inputs in the following experiments.}
         \label{fig:z_gt}
     \end{subfigure}
     \hfill
     \caption{\textbf{Ground truth structures used for the synthetic experiments.} (a) shows the correlation levels introduced among dictionary elements at each setting. (b) shows the corresponding ground truth activation patterns used to construct the inputs.}
\end{figure}

\begin{figure}
    \centering
    \includegraphics[width=\linewidth]{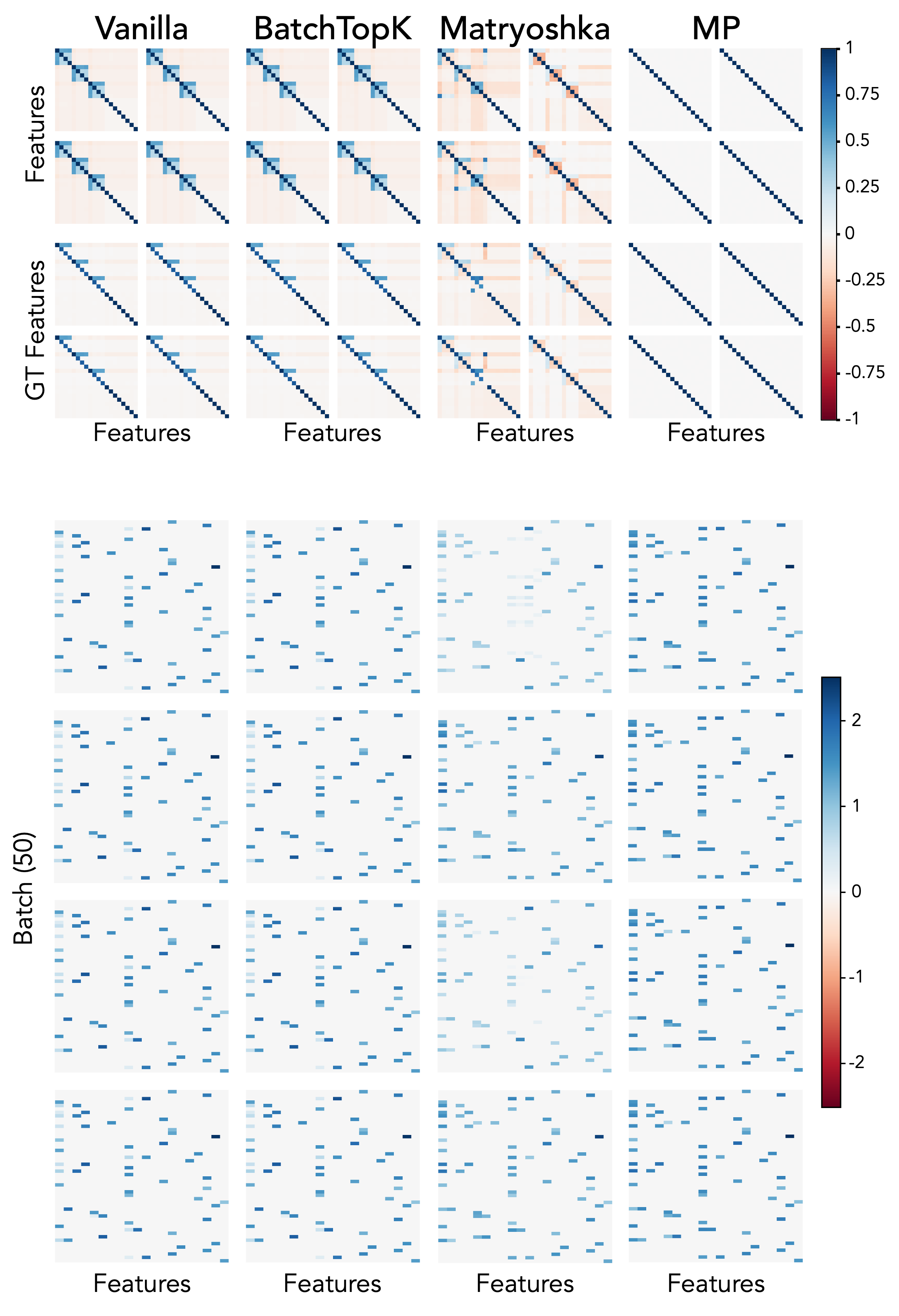}
    \caption{\textbf{4 runs for correlation level 0.} \textbf{Top:} Correlation matrices — the top row shows $\D^\top \D$ (self-similarity of learned features), and the bottom row shows $\D_{\text{GT}}^\top \D$ (alignment with ground truth). \textbf{Bottom:} Recovered sparse codes $\mathbf{z}$.
}
    \label{fig:corr0}
\end{figure}

\begin{figure}
    \centering
    \includegraphics[width=\linewidth]{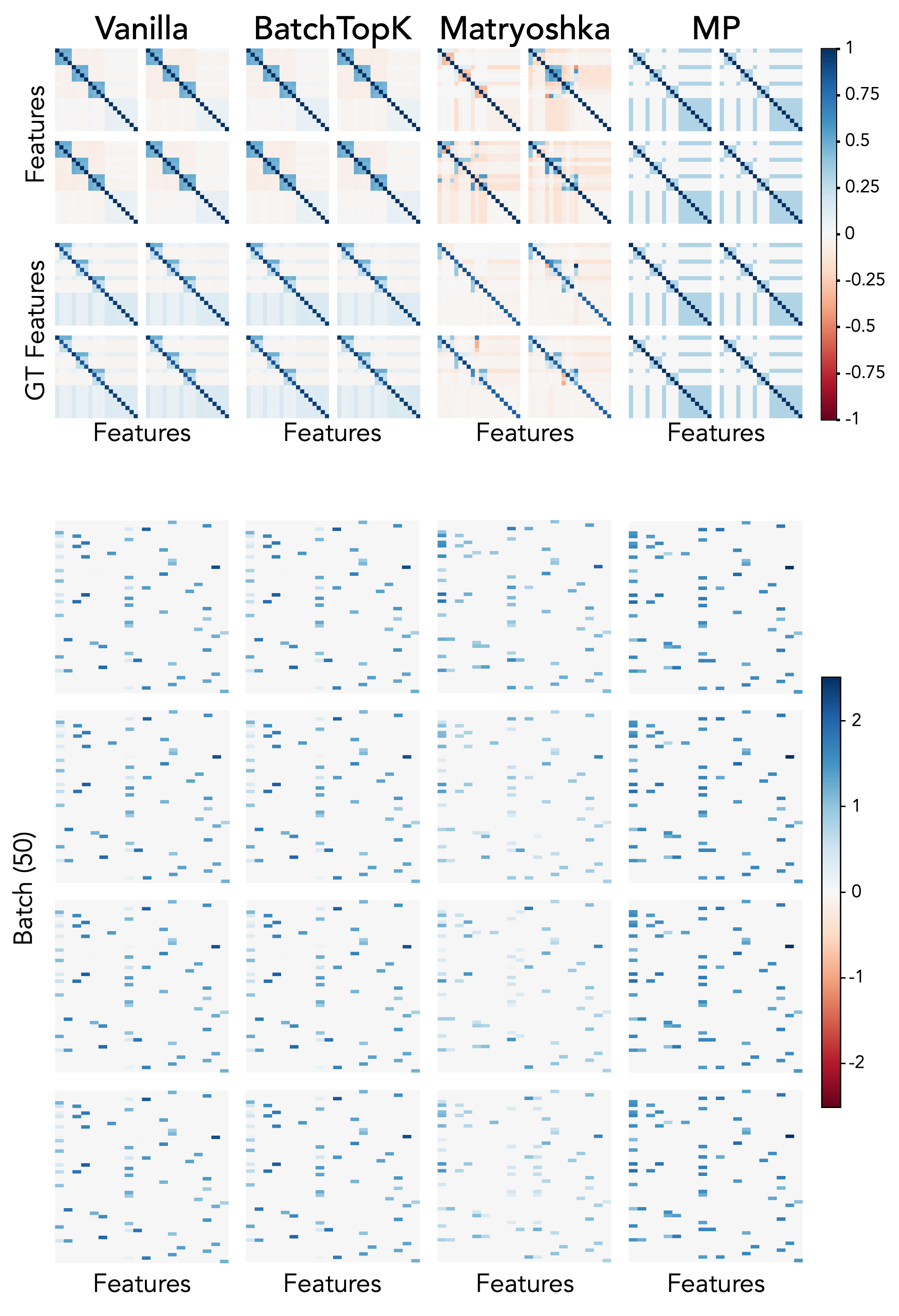}
    \caption{\textbf{4 runs for correlation level 0.3.} \textbf{Top:} Correlation matrices — the top row shows $\D^\top \D$ (self-similarity of learned features), and the bottom row shows $\D_{\text{GT}}^\top \D$ (alignment with ground truth). \textbf{Bottom:} Recovered sparse codes $\mathbf{z}$.
}
    \label{fig:corr0.3}
\end{figure}

\begin{figure}
    \centering
    \includegraphics[width=\linewidth]{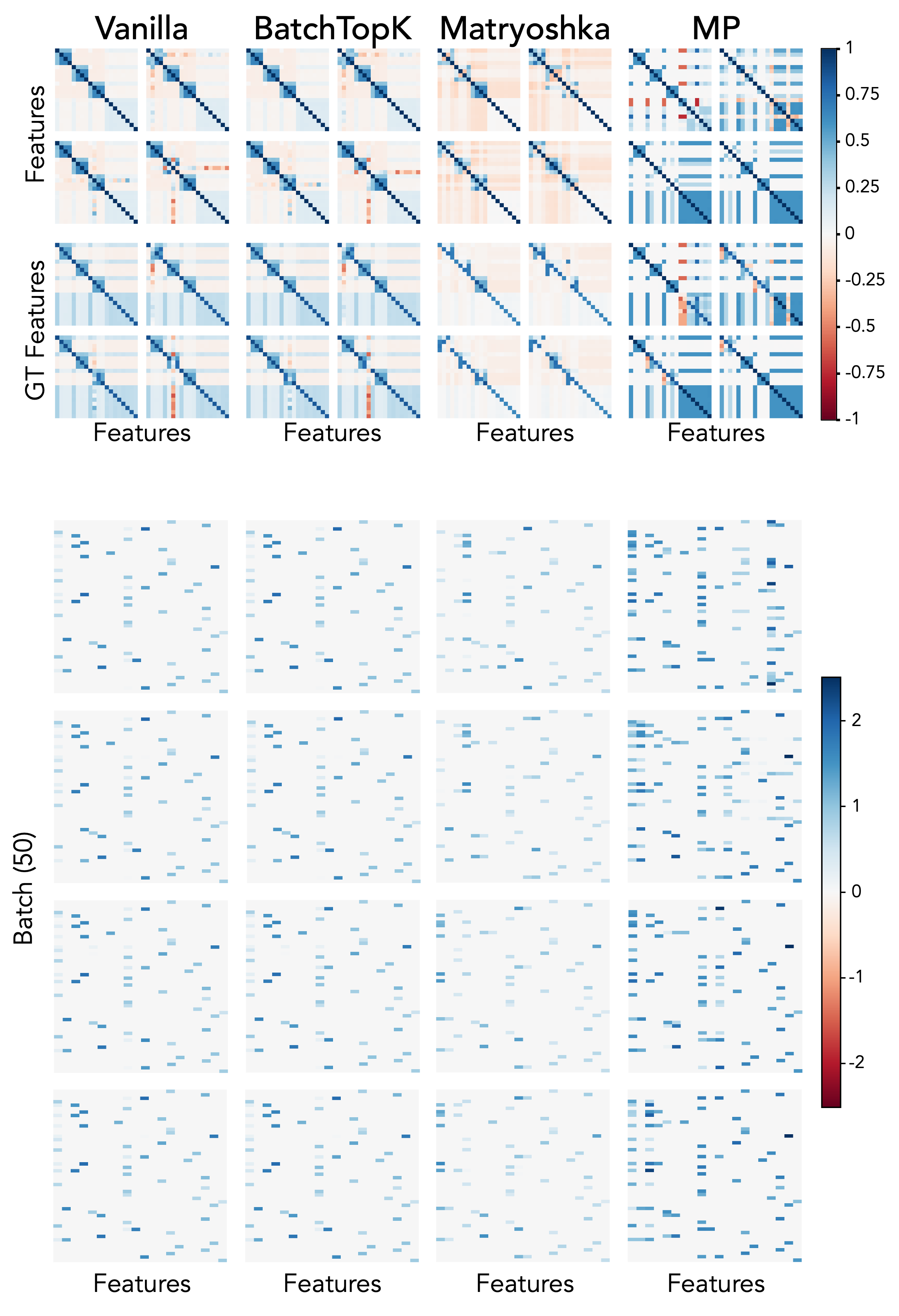}
    \caption{\textbf{4 runs for correlation level 0.6.} \textbf{Top:} Correlation matrices — the top row shows $\D^\top \D$ (self-similarity of learned features), and the bottom row shows $\D_{\text{GT}}^\top \D$ (alignment with ground truth). \textbf{Bottom:} Recovered sparse codes $\mathbf{z}$.
}
    \label{fig:corr0.6}
\end{figure}

\begin{figure}
    \centering
    \includegraphics[width=\linewidth]{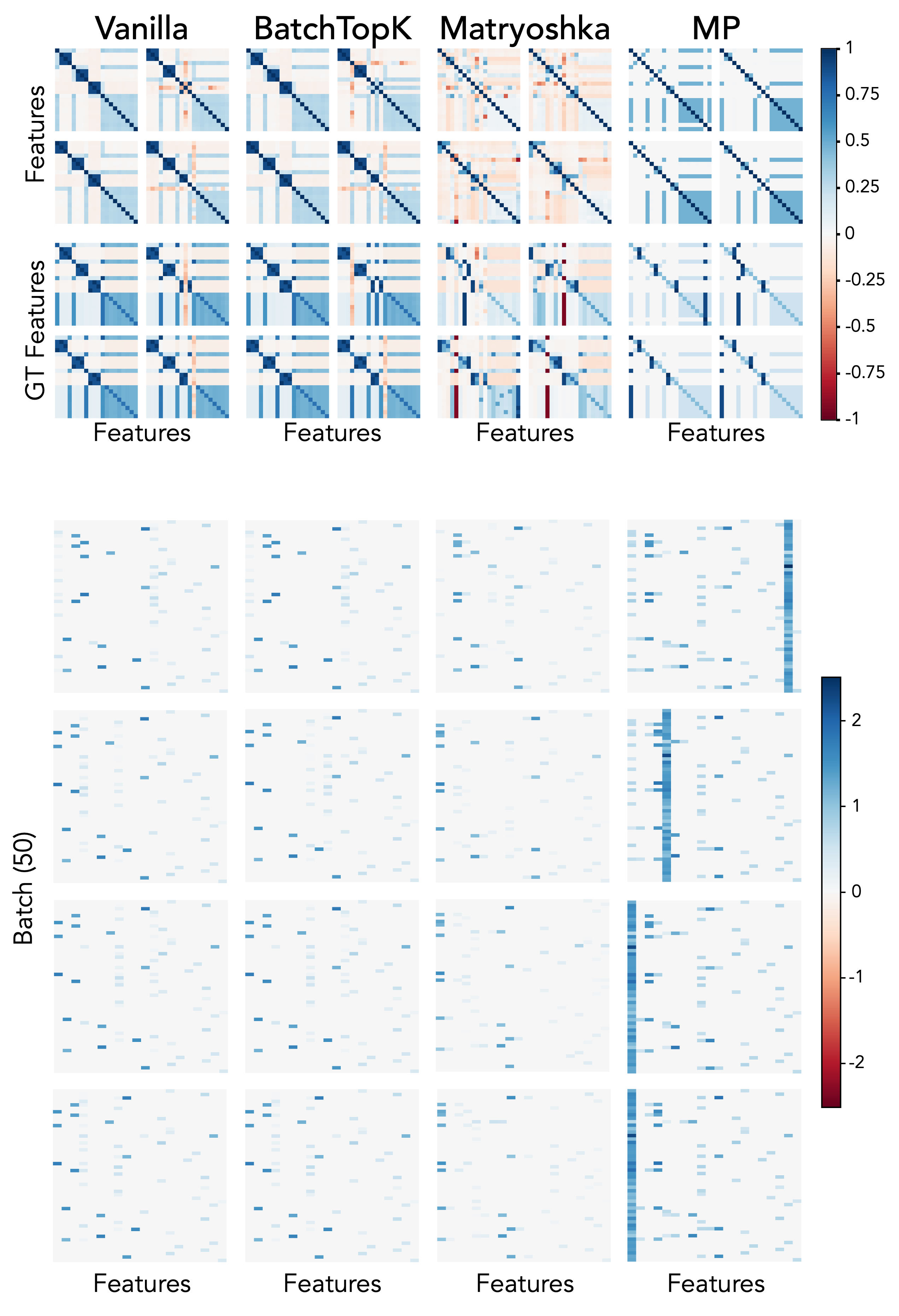}
    \caption{\textbf{4 runs for correlation level 0.9.} \textbf{Top:} Correlation matrices — the top row shows $\D^\top \D$ (self-similarity of learned features), and the bottom row shows $\D_{\text{GT}}^\top \D$ (alignment with ground truth). \textbf{Bottom:} Recovered sparse codes $\mathbf{z}$.
}
    \label{fig:corr0.9}
\end{figure}

\newpage

\subsubsection{Introducing Variance in Firings}\label{sec:variance}

In the original codebase of Bussmann et al.~\cite{bussmann2025learning}, activation magnitudes are fixed across all inputs. This implicitly assumes that parent and child firings are perfectly correlated:
\begin{equation*}
    z_p = \lambda z_c.
\end{equation*}
Such a configuration defines a strict hierarchical setting that inherently favors \emph{feature absorption}. In contrast, our formulation does not impose any constraint on the correlation between firing magnitudes across concepts. Our setup generalizes the original formulation by removing the perfect correlation assumption. This is achieved by introducing variance in the activation magnitudes while maintaining the sole structural assumption that a child feature can only be active if its parent is active (Definition~\ref{def:generative_hierarchical_process}).  

\paragraph{Rationale.}
To illustrate the consequences of assuming perfect correlation in activation magnitudes, consider two concepts: \textit{“red object”} (parent) and \textit{“apple”} (child). The child can only activate when the parent does (assuming all apples are red). If the parent’s firing magnitude $z_p$ represents color intensity and the child’s $z_c$ represents size, assuming $z_p$ and $z_c$ are perfectly correlated is equivalent to stating that larger apples are always redder. This perfectly merges the two informational components—color intensity ($z_p \D_p$) and apple size ($z_c \D_c$)—into a single absorbed direction $z_p \D_p + z_c \D_c$, thereby eliminating their hierarchical distinction. In this extreme case, it becomes unclear whether the absorbed feature should still be interpreted as a child, since its activation is perfectly predictable from the parent. We argue that imperfect correlation between $z_p$ and $z_c$ is a necessary condition for preserving hierarchical interpretability.

\paragraph{Implications for Sparse Methods.}
Following Occam’s razor, an $\ell_0$-penalized method such as MP-SAE will optimally encode both parent and child with a single feature when their activations are nearly identical, since the absorbed feature forms a highly compressible direction~\cite{ayonrinde2024interpretability}. Encoding the absorbed feature $z_p \D_p + z_c \D_c$ yields a sparsity of 1, whereas disentangling parent and child requires a sparsity of 2.  
Therefore, it is optimal for MP-SAE to represent the parent and child through a single absorbed feature when it can perfectly reconstruct $\x$ with lower sparsity, which is precisely what occurs when no firing variance is introduced, as in the original benchmark. However, because of the greedy nature of Matching Pursuit, MP-SAE may still learn to disentangle the parent and child, depending on initialization. Figure~\ref{fig:og_matryoshka} illustrates this phenomenon: it shows the correlation matrices and ground-truth alignment for the unmodified Matryoshka SAEs benchmark, trained for 15{,}000 iterations with a threshold of $T = 0.6$ for MP-SAE.  

\paragraph{Additional results with different child firing distributions.}

In our generative data process, both parent and child activations initially follow the same distribution, $z_p \sim \mathcal{N}(1.5, \sfrac{1}{4^2})$ and $z_c \sim \mathcal{N}(1.5, \sfrac{1}{4^2})$. 
However, high variance combined with equal means ($\mu_p = \mu_c$) can produce edge cases where a strong child and a weak parent yield $\x = z_p \D_p + z_c \D_c \approx z_c \D_c$, effectively making the parent contribution negligible and thus deviating from Definition~\ref{def:generative_hierarchical_process}.  

To assess the robustness of MP-SAE under varying child activation statistics, we evaluated its performance across a grid of child means and variances, while keeping the parent distribution fixed as $z_p \sim \mathcal{N}(1, \sfrac{1}{4^2})$. 
For each configuration, we computed an \emph{absorption score}, defined as the average cosine similarity between the learned child features and their corresponding ground-truth parents over 15 independent runs. 
Lower similarity indicates successful disentanglement, whereas higher similarity reflects feature absorption. NaN values correspond to cases where the model failed to recover the child feature due to its very low firing amplitude.  
All activations were constrained to remain positive when active (entries marked “--” in Table~\ref{tab:mp_matryoshka_vanilla} denote untested configurations). 

Across all evaluated conditions, MP-SAE consistently achieves low absorption scores, confirming its robustness and its ability to recover hierarchical structure even under challenging firing regimes. Indeed, when the child average firing magnitude is particularly low (0.1), MP-SAE remains the only method capable of recovering the hierarchical structure, owing to its iterative greedy encoder. 
Interestingly, when the variance of the child distribution decreases for a mean of $0.5$—a regime where overall performance becomes comparable to Matryoshka—the two methods exhibit opposite trends: Matryoshka performs better at low variance, whereas MP-SAE improves as variance increases. 
This observation suggests that the two approaches embody complementary inductive biases: Matryoshka is particularly effective in scenarios dominated by strong feature absorption and near-perfect correlation, while MP-SAE excels when moderate variance is present in the firing magnitudes.

\begin{figure}[H]
    \centering
    \includegraphics[width=\linewidth]{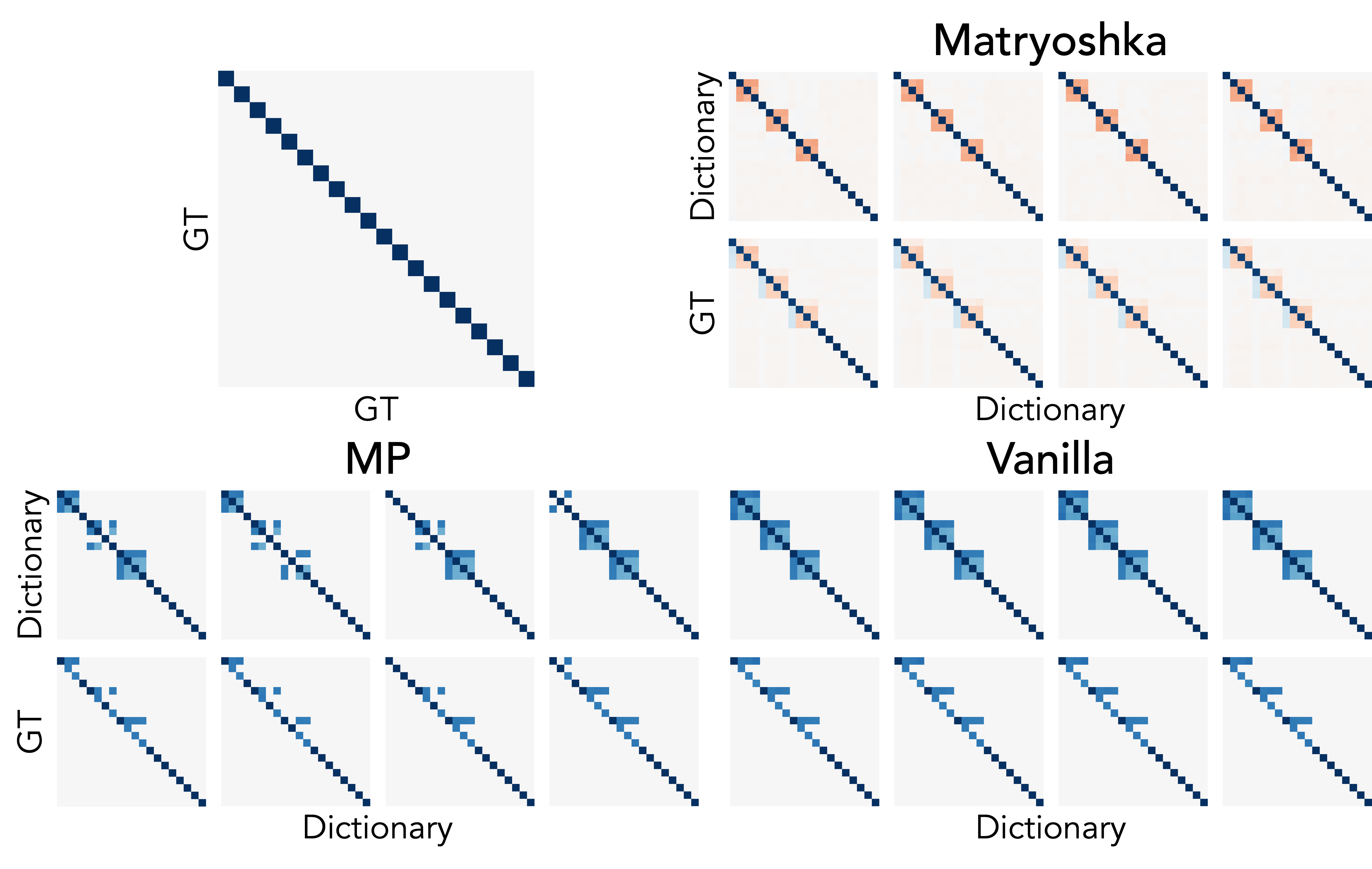}
    \caption{\textbf{Evaluating MP-SAE on the original Matryoshka benchmark\protect\footnotemark, where firings are perfectly correlated.} 
    For each SAE, four independent runs are shown. The top row displays $\D^\top \D$ (self-similarity of learned features), and the bottom row shows $\D_\text{GT}^\top \D$ (alignment with ground truth).
    \textbf{Top left:} ground-truth correlation matrix, where features are perfectly orthogonal. \textbf{Matryoshka} overcomes feature absorption at the cost of negative interference. \textbf{Vanilla} consistently learns absorbed features. \textbf{MP-SAE} occasionally recovers parent–child pairs thanks to its inductive bias, but under perfect correlation, its behavior is unstable.}
    \label{fig:og_matryoshka}
\end{figure}

\begin{table}[H]
\centering
\caption{
\textbf{Comparison of absorption scores across sparse autoencoders for different child activation distributions.} 
Child activations follow normal distributions with varying means (rows) and standard deviations (columns), while the parent distribution is fixed as $z_p \sim \mathcal{N}(1, \sfrac{1}{4^2})$. 
Lower absorption scores indicate better disentanglement between parent and child features. 
MP-SAE maintains low absorption across settings and remains the only method robust to low mean child firings.}

\vspace{5pt}
\label{tab:mp_matryoshka_vanilla}

\begin{tabular}{cc}
\begin{tabular}{@{}l|ccc@{}}
\toprule
\textbf{MP} & 0.1 & 0.5 & 1.0 \\ 
\midrule
1/400 & 3.6e-03 & 7.3e-02 & 1.3e-03 \\
1/40  & 5.0e-03 & 6.2e-02 & 1.2e-03 \\
1/8   & --      & 4.3e-02 & 1.1e-03 \\
1/4   & --      & --      & 1.1e-03 \\
\bottomrule
\end{tabular}
&
\begin{tabular}{@{}l|ccc@{}}
\toprule
\textbf{Matryoshka} & 0.1 & 0.5 & 1.0 \\ 
\midrule
1/400 & nan & 4.8e-02 & 2.1e-01 \\
1/40  & nan & 9.0e-02 & 1.5e-01 \\
1/8   & --  & 9.6e-02 & 2.1e-01 \\
1/4   & --  & --      & 2.2e-01 \\
\bottomrule
\end{tabular}
\\
\\
\begin{tabular}{@{}l|ccc@{}}
\toprule
\textbf{Vanilla} & 0.1 & 0.5 & 1.0 \\ 
\midrule
1/400 & nan & 7.0e-01 & 4.5e-01 \\
1/40  & nan & 7.0e-01 & 4.5e-01 \\
1/8   & --  & 7.0e-01 & 4.7e-01 \\
1/4   & --  & --      & 4.7e-01 \\
\bottomrule
\end{tabular}
&
\begin{tabular}{@{}l|ccc@{}}
\toprule
\textbf{BatchTopK} & 0.1 & 0.5 & 1.0 \\ 
\midrule
1/400 & nan & 5.0e-01 & 4.8e-01 \\
1/40  & nan & 5.4e-01 & 4.5e-01 \\
1/8   & -- & 4.9e-01 & 4.6e-01 \\
1/4   & -- & -- & 4.2e-01 \\
\bottomrule
\end{tabular}
\\
\end{tabular}
\end{table}

\footnotetext{\url{https://github.com/noanabeshima/matryoshka-saes}}

\newpage

\subsection{Large Vision Experiment}
\label{app:details_pareto}

We train our vision SAEs on activations from four pretrained vision and VLM models: CLIP, DINOv2, SigLIP, and ViT. Representations are extracted from the final layer of each model, utilizing all spatial tokens (e.g., DINOv2 yields approximately 261 tokens per image). Each SAE processes these token-wise activations independently.
Training is conducted on the ImageNet-1K training set, comprising around 1,3 Millions images. Over 50 epochs, this results in approximately $1.3 \times 10^6 \times 50 \times 261 \approx 16.9$ billion input tokens (for DINOv2). We employ a batch size of 8,000 tokens per step and train all models using the AdamW optimizer with a cosine learning rate schedule: the learning rate warms up from $10^{-6}$ to $5 \times 10^{-4}$ and decays back to $10^{-6}$ by the final epoch. A fixed weight decay of $10^{-5}$ is applied throughout.
All SAEs utilize an expansion factor of 25, meaning the learned dictionary $\D \in \R^{c \times d}$ satisfies $c = 25d$, where $d$ is the dimensionality of the input activations. Each column $\D_i$ is constrained to lie on the unit $\ell_2$ ball: $\|\bm{D}_i\|_2 \leq 1$. The loss given is the standard MSE.
To maintain active support coverage, a revive factor of $10^{-5}$ is added to any pre-code unit that fails to activate in a given batch, slightly increasing its pre-activation to reintroduce gradient flow. For Vanilla SAEs, we apply an adaptive $\ell_1$ penalty: if the empirical $\ell_0$ sparsity of a batch exceeds a target threshold, the $\ell_1$ regularization weight is increased to suppress overactivation.
All encoder architectures consist of a one-layer linear projection followed by a ReLU activation. For the pareto results, one SAE is trained for each configuration (sparsity, models).

\subsection{Code Structure and Analysis of Dictionary}
\label{app:coherence}

\paragraph{Effective Rank of Feature Co-Activation.}
To quantify the diversity of feature usage in sparse autoencoders, we compute the effective rank of the co-activation matrix $\Z^\top \Z$, where $\Z \in \mathbb{R}^{n \times p}$ contains the sparse codes across $n$ inputs. Each entry in $\Z^\top \Z$ reflects how often pairs of features are jointly active, and its spectral structure reveals how concentrated or distributed these co-activations are.
The effective rank~\cite{roy2007effective} is defined as:
\begin{equation}
    \operatorname{Erank}(\Z^\top \Z) = \exp\left( -\sum_{i=1}^p \tilde{\lambda}_i \log \tilde{\lambda}_i \right) \label{eq:effective_rank}
\end{equation}
where $\tilde{\lambda}_i$ are the eigenvalues of $\Z^\top \Z$ normalized to sum to 1. This corresponds to the exponential of the Shannon entropy of the spectrum. High effective rank indicates that feature co-activations are spread across many directions (i.e., more diverse, less redundant usage), while low effective rank suggests repeated use of a few dominant feature combinations.

\paragraph{Pairwise Coherence of Dictionary Features.} To further analyze the internal structure of learned dictionaries, we examine the distribution of pairwise inner products $\D^\tr\D$, which captures the angular alignment between features. Figure~\ref{fig:DTD} shows histograms ofof the values for dictionaries trained on the vision models. We used the SAEs from the pareto front, with a fixed inference-time sparsity of $k=5$.
We observe that dictionaries learned by standard SAEs (e.g., Vanilla, TopK) exhibit a distribution sharply peaked at zero, reflecting a strong bias toward global quasi-orthogonality. This is a direct consequence of their encoding mechanism: features are selected via a single global projection, which incentivizes mutually uncorrelated features to avoid interference.

By contrast, dictionaries learned by MP-SAE display broader distributions, including significant mass at nonzero values. This reflects an important difference in inductive bias: MP-SAE does not enforce orthogonality globally. Instead, its sequential, residual-guided encoder dynamically selects features that are orthogonal \emph{conditioned} on prior selections. This means that the dictionary can afford to contain closely aligned features -- as long as they do not co-activate for the same input.

\begin{figure}[H]
     \centering
     \begin{subfigure}[b]{\textwidth}
         \centering
         \includegraphics[width=\textwidth]{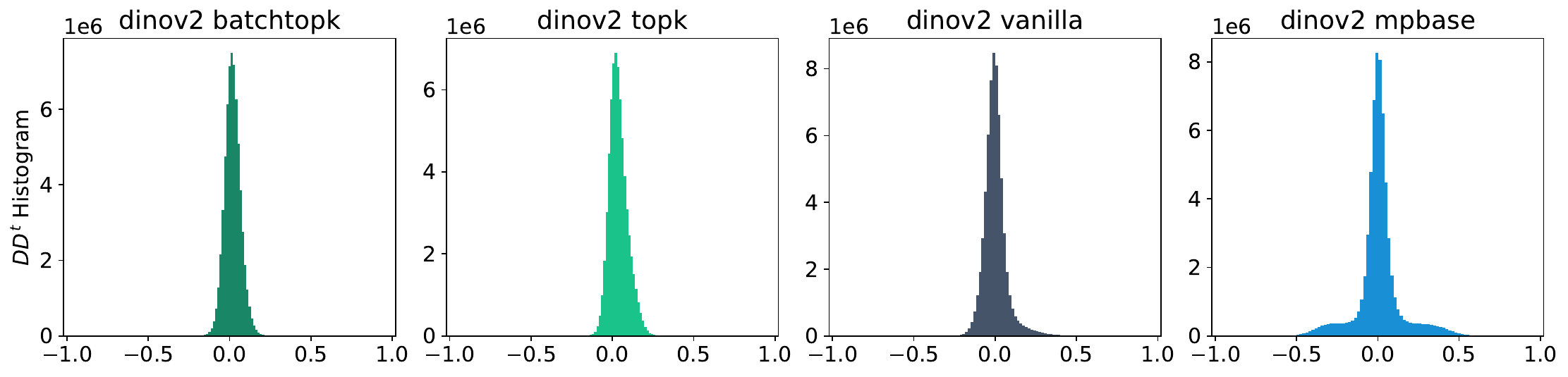}
         \caption{Pairwise inner products ($\D^\tr \D$) for dictionaries trained on DINOv2 at $k=5$.}
     \end{subfigure}
     \begin{subfigure}[b]{\textwidth}
         \centering
         \includegraphics[width=\textwidth]{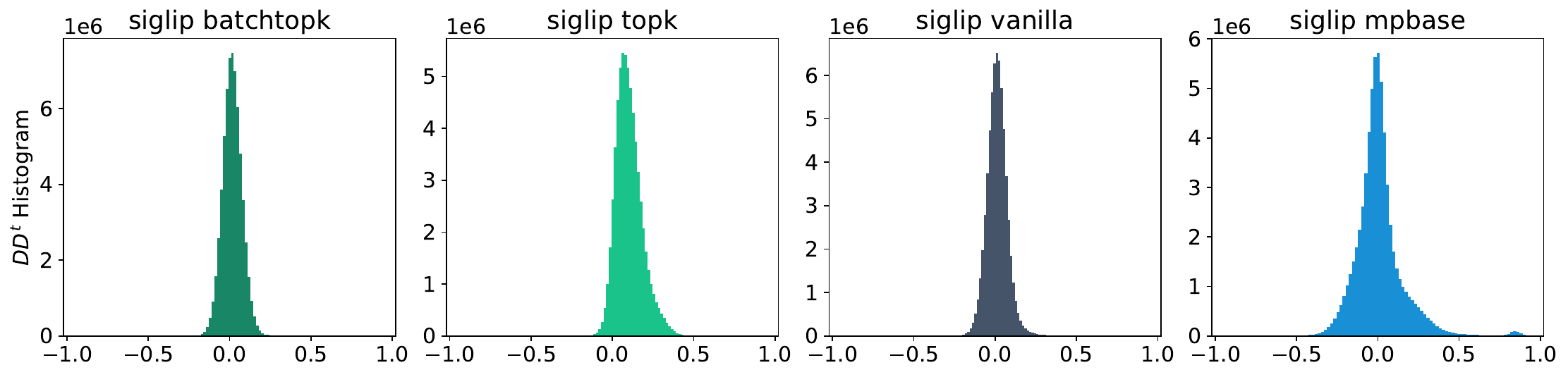}
         \caption{Pairwise inner products ($\D^\tr \D$) for dictionaries trained on Siglip at $k=5$.}
     \end{subfigure}
     \begin{subfigure}[b]{\textwidth}
         \centering
         \includegraphics[width=\textwidth]{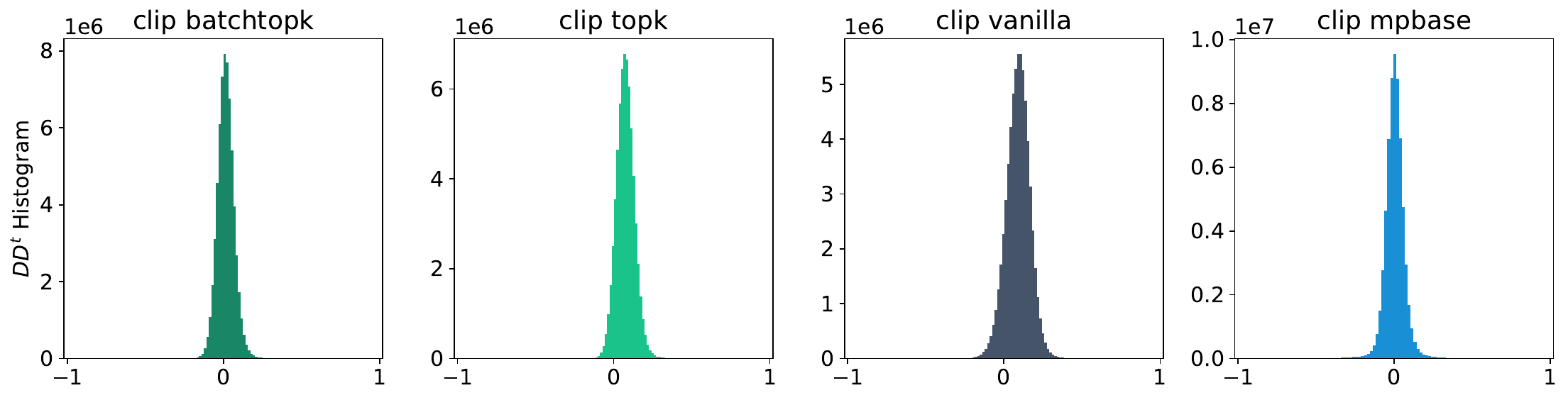}
         \caption{Pairwise inner products ($\D^\tr \D$) for dictionaries trained on Clip at $k=5$.}
     \end{subfigure}
     \begin{subfigure}[b]{\textwidth}
         \centering
         \includegraphics[width=\textwidth]{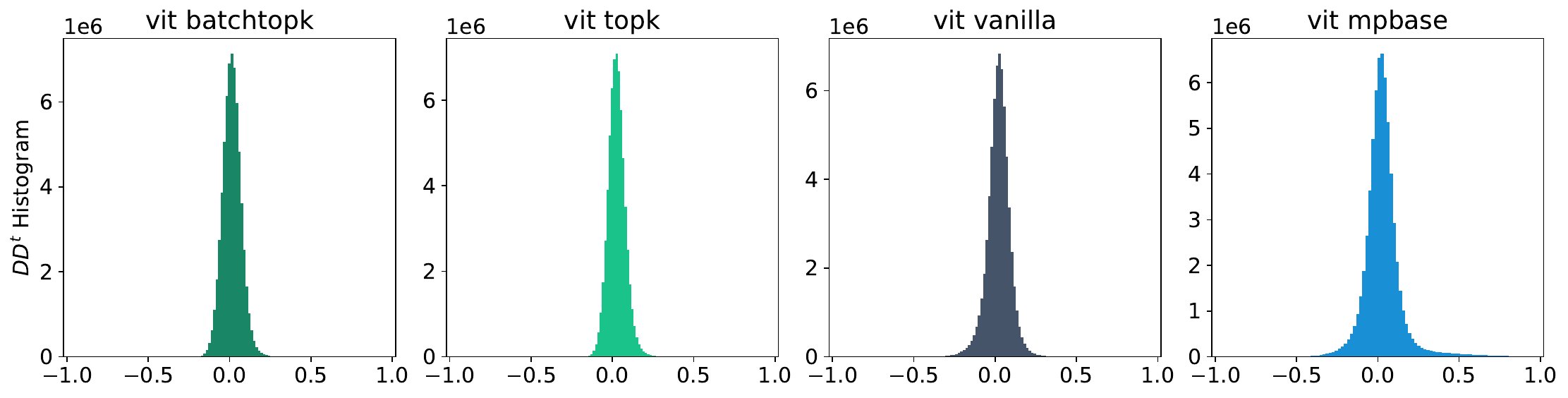}
         \caption{Pairwise inner products ($\D^\tr \D$) for dictionaries trained on ViT at $k=5$.}
     \end{subfigure}
     \caption{Distribution of dictionary coherence ($\D^\tr \D$) reveals differing inductive biases. Standard SAEs encourage globally orthogonal features; MP-SAE tolerates correlated features and resolves interference at inference.}
     \label{fig:DTD}
\end{figure}

\newpage
\subsection{Multimodal Models}\label{sec:lvm}

\paragraph{Setup.}
We evaluates SAEs of 4 VLMs: CLIP, SigLIP, SigLIP2, and AIMv2. We use the final layer of the shared embedding space—that is, the common representation into which both image and caption inputs are projected. This layer reflects the aligned modality-invariant space optimized by contrastive pretraining.
We train all SAEs on the full MS-COCO dataset~\cite{lin2014microsoft}, which consists of approximately 100,000 images and 500,000 associated captions. Each training example corresponds to a single embedding vector (either image or text), and models are trained jointly across both modalities using a shared dictionary (expansion factor 25). As in prior settings, we constrain each dictionary column to lie on the unit $\ell_2$ ball.
All SAEs use a one-layer encoder followed by ReLU activations and are trained with mean squared error (MSE) loss. We fix the target inference-time sparsity to $\|\bm{z}\|_0 = 5$. The optimization procedure mirrors that used for vision models for a total of 20 training epochs. The same revive mechanism is applied, whereby inactive units are nudged via a small additive bias ($10^{-5}$) to encourage gradient flow.

\paragraph{Modality score}
To quantify the extent to which learned features specialize by modality or capture shared structure, we compute the Modality Score for each feature, following the formulation of \cite{papadimitriou2025interpreting}. 
Values near $1$ indicate image-specific features; values near $0$ indicate text-specific features; and intermediate values reflect balanced, multimodal activation.
To ensure that Modality Scores reflect relative activation patterns rather than absolute energy differences between modalities, we scale the energy of text inputs by a factor of $1/5$ before computing the above quantities. This corrects as we have 5 times more captions than images. We find that the same overall patterns emerge when we apply a modality wise normalization.

The figure \ref{fig:modality_gap} illustrates the modality gap and how different inference strategies shape learned representations in vision-language models. Each point on the sphere represents a direction in a shared embedding space, such as the one learned by CLIP~\cite{radford2021learning}, where both images and captions are aligned.

On the left, standard sparse autoencoders (SAEs) tend to learn split dictionaries~\cite{parekh2024concept, bhalla2024towards}, assigning different features to each modality despite their alignment in the joint space. This results in high modality selectivity: visual and textual inputs activate disjoint sets of features. 

On the right, MP-SAE progressively refines its representation by explaining modality-specific components in early steps and shifting focus to shared structure in later steps. This allows the model to discover genuinely multimodal concepts that respond to both image and text inputs, effectively bridging the modality gap and producing more balanced, semantically aligned features. 

\begin{figure}[H]
    \centering
    \includegraphics[width=\linewidth]{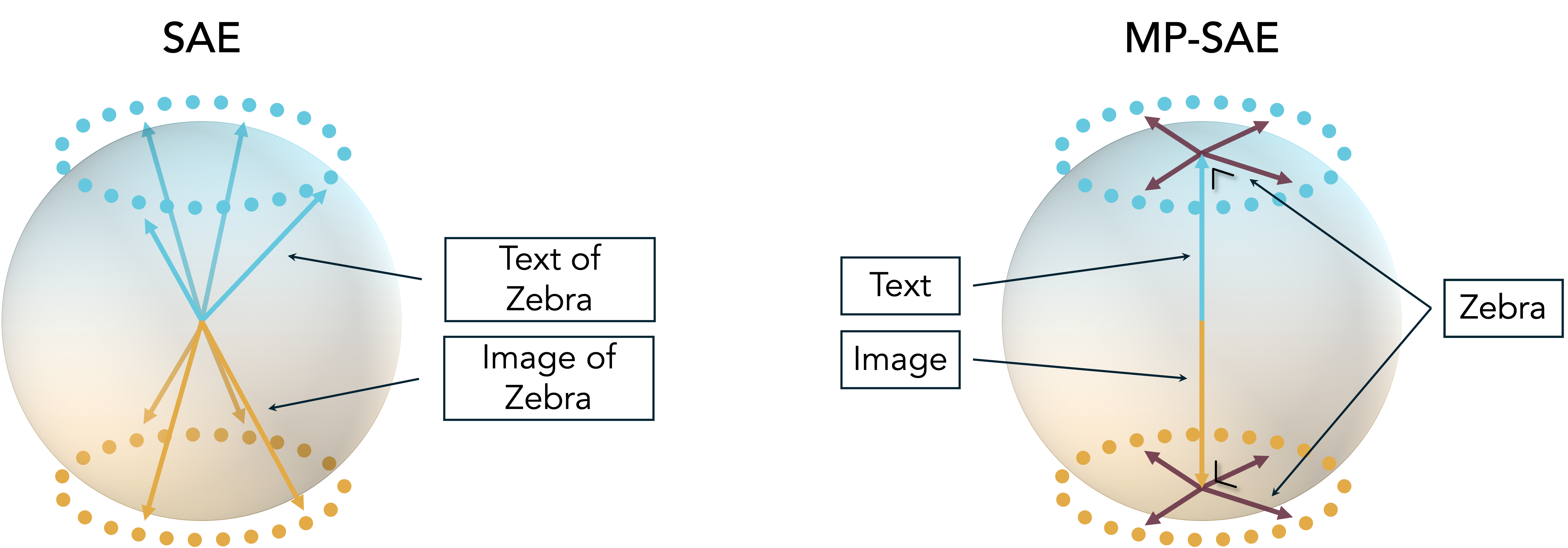}
    \caption{\textbf{Illustration of modality-selective vs. multimodal inference.} Each point represents an image or caption embedding on the unit sphere of a joint representation space. Left: standard SAEs learn split dictionaries, with features specializing in only one modality (blue = text, yellow = image), despite semantic alignment. Right: MP-SAE explains modality-specific content in early steps and aligns shared concepts in later ones. For instance, an image of a zebra and the text “a zebra in the savannah” may initially activate modal specific features, but converge toward shared features encoding the concept “zebra.”
}
    \label{fig:modality_gap}
\end{figure}

\begin{figure}[H]
    \centering
    \includegraphics[width=\linewidth]{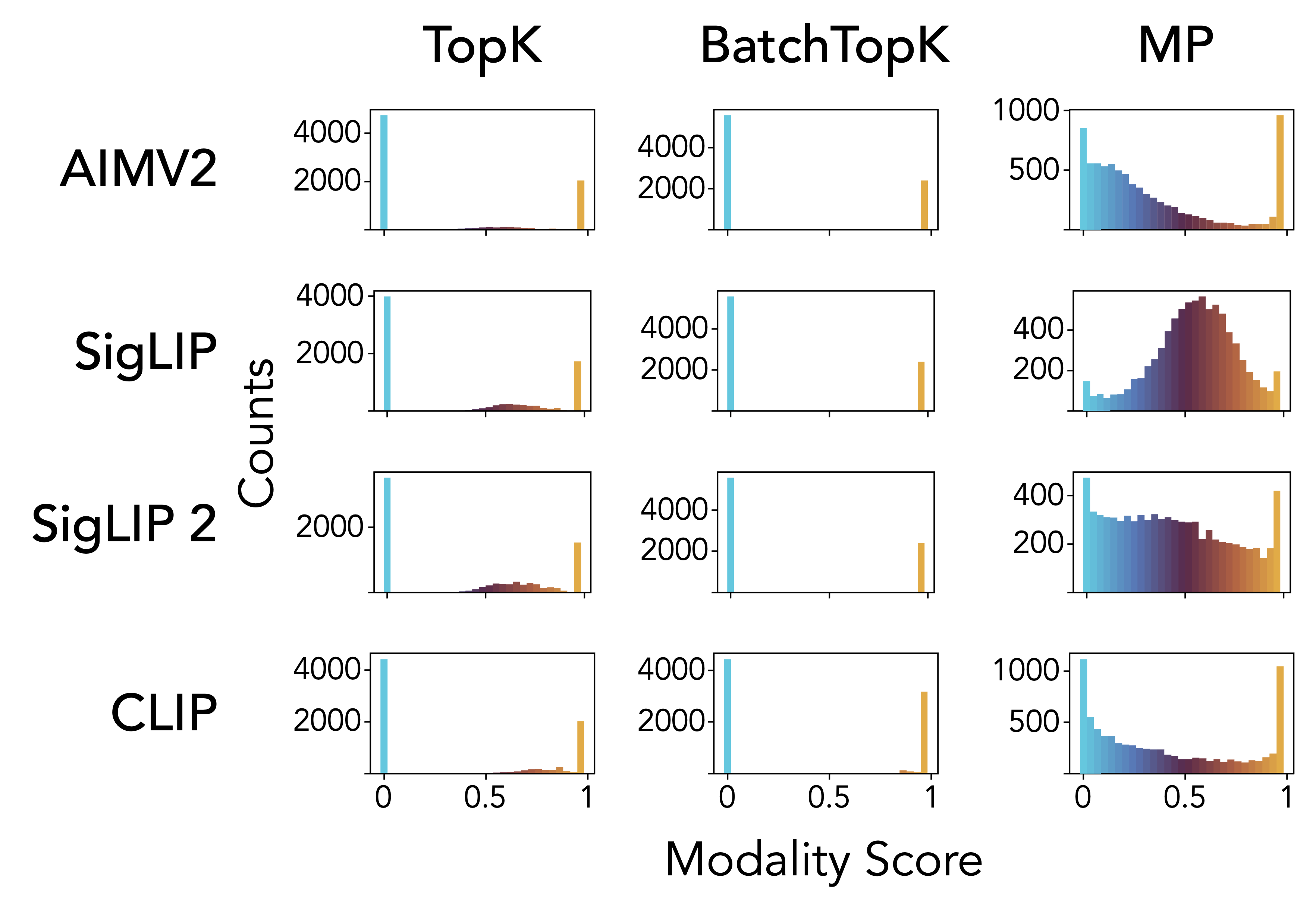}
    \caption{Distribution of modality scores for features learned by different SAEs, across four vision-language models: CLIP, SigLIP, SigLIP2, and AIMv2. Standard SAEs (e.g., TopK, BatchTopK) tend to produce bimodal distributions, with many units activating exclusively for image or text inputs, indicating a strong modality-specific specialization. In contrast, MP-SAE yields a significantly flatter distribution with substantial density near $0.5$, revealing the emergence of genuinely multimodal features that respond to both modalities. This supports the hypothesis that MP-SAE’s inference allows it to progressively isolate shared semantic structure after accounting for modality-specific variance.}
    \label{fig:modality_hist}
\end{figure}

\newpage
\subsection{Preliminary Experiments with Language Models}\label{sec:llm}
\subsubsection{Experimental details}\label{sec:llm-details}

\begin{wrapfigure}[36]{r}{0.5\textwidth}
  \begin{center}
  \vspace{-2.1cm}
    \includegraphics[width=\linewidth]{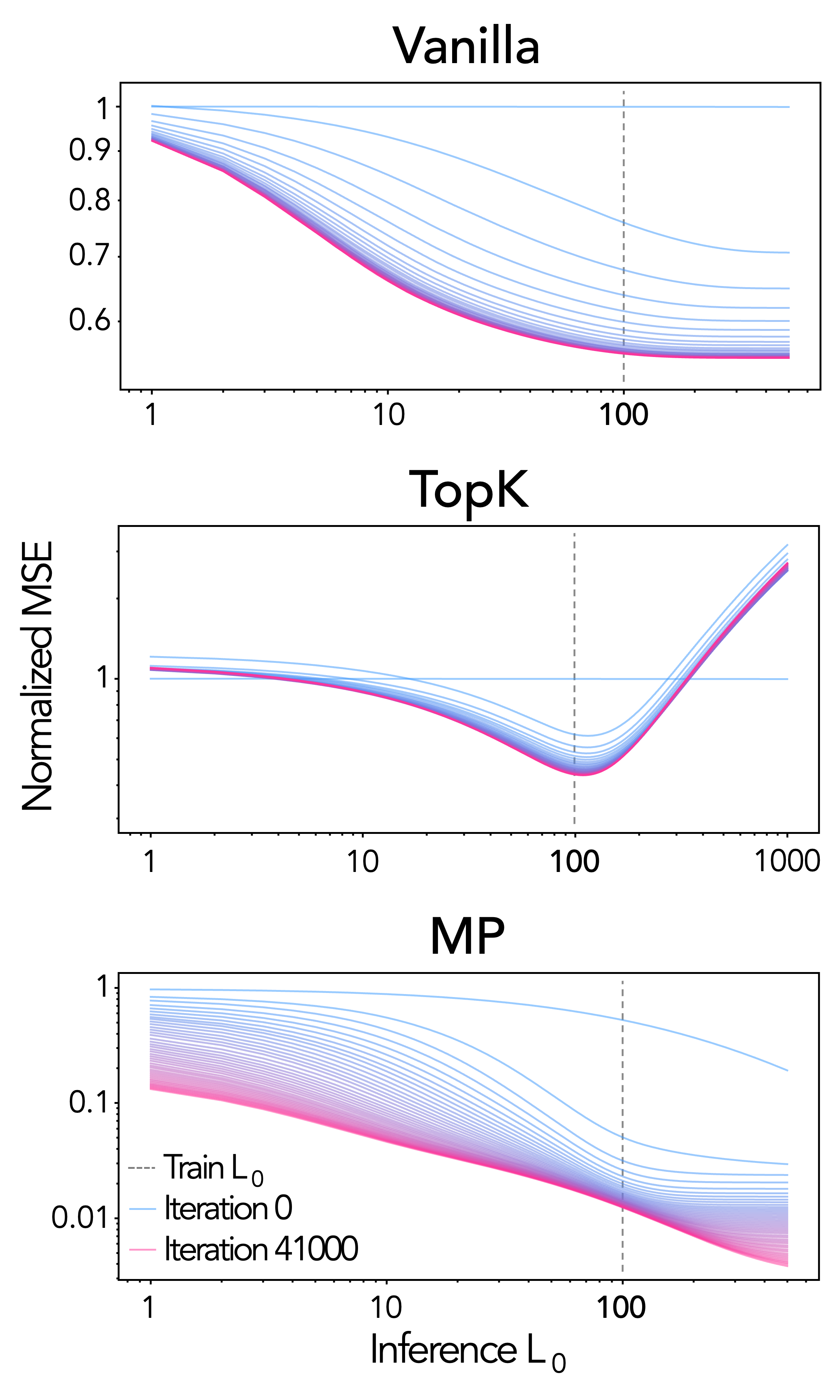}
  \end{center}
  \vspace{-0.4cm}
  \caption{\label{fig:inference_llm}\textbf{Normalized MSE with increase in training iterations and inference-time sparsity.} We train Vanilla, TopK, and MP SAEs and plot the normalized MSE between their predicted vs.\ ground-truth representations from models trained on TinyStories. Results clearly show the flexibility of MP-SAE: increasing number of latents used for reconstruction monotonically reduce error, allowing one to use more or less latents, as desired. This property is emergent with training, however, since earlier iterations of MP-SAE do not exhibit it.}
\end{wrapfigure}
To assess whether our analysis of the vision domain (see Sec.~\ref{sec:lvm}) generalizes to a language modeling setting, we perform preliminary experiments using a 4-layer Transformer model~\cite{tstoriesmodel} pretrained on the TinyStories dataset~\cite{eldan2023tinystories}.\footnote{We also provide a trained MP-SAE on Gemma activations at \url{https://github.com/eslubana/mpsae}.}
We train the Vanilla~\cite{bricken2023monosemanticity}, TopK~\cite{gao2025scaling}, and MP-SAE (see Sec.~\ref{sec:methods}) on residual stream representations extracted from midway through the model, i.e., end of the second Transformer block (from amongst 4 total blocks).
$p$, which refers to the width of our SAEs, i.e., the size of the sparse code $z$, is set to be $4\times m$ for all SAEs; here, $m$ corresponds to the dimensionality of the residual stream.
For TopK and MP-SAE, we set $\ell_0=100$ for any inputted representation; for Vanilla SAE, we search for a value of $\lambda$ to ensure after training $\ell_0$, on average, is $100$.
Inline with prior work~\cite{templeton2024scaling}, activations are standardized by computing a population mean and standard deviation and rescaled to be, on expectation, $\sqrt{m}$ magnitude in norm.
Training is performed using Adam optimizer~\cite{kingma2014adam}, with a constant learning rate of $10^{-3}$, $\beta_1, \beta_2 = 0.9, 0.95$, weight decay of $10^{-4}$, and gradient clipping at unit-norm magnitude for all the SAEs.
Batch-size is set to be $5000$ tokens, each of which is randomly sampled from samples (stories) from the train-split of TinyStories.
Training goes on for approximately $40$K iterations.
All results in the following experiments are performed on samples drawn from the eval-split of TinyStories.

\subsubsection{Results}\label{sec:llm-results}

\paragraph{Reconstruction Error with Inference L$_0$.} 
Similar to Fig.~\ref{fig:inference_at_k}, for all SAEs, we take their parameters trained up to iteration $t$ and compare the inputted representations ($x$) with the ones predicted by the SAE given its $k$ largest magnitude latents in the sparse code, denoted $x_{t,k}$.
Specifically, we compute the \textbf{normalized MSE}, $\mathbb{E}_x\left[\nicefrac{||\hat{x}_{t,k} - x||^{2}}{||x||^2}\right]$, for increasing values of $t$ and $k$.
We note this experiment is different from the one reported in Fig.~\ref{fig:inference_at_k} because it also assesses the effects of training, hence going beyond highlighting the inference-time flexibility of MP-SAE.
Such an analysis is manageable in the current setting since models are not exceptionally big ($\sim1$M parameters).

Results are shown in Fig.~\ref{fig:inference_llm}, where we most noticeably see a similar trend as Fig.~\ref{fig:inference_at_k}: as $k$ is increased, Vanilla SAEs smoothly move towards a minimum amount of error that corresponds to the value of $k$ used for training; TopK SAEs see reduction in error up to the value of $k$ used for training, witnessing a large increase in error thereafter; and MP-SAEs again exhibit their flexibility, with error monotonically decreasing even beyond the value of $k$ used for training. 
It is worth noting that this flexibility of MP-SAE only emerges after enough training has occurred---in fact, earlier checkpoints exhibit a saturation or lower-bound of loss akin to the Vanilla SAEs. 
Finally, we emphasize again that this inference-time flexibility is a natural consequence of building on error residuals, unlike prior SAEs. 
Prior work hoping to elicit such a behavior had to manually train via a mixture of sparsity values~\cite{gao2025scaling}.

\paragraph{Babel Score Analysis on Language Models.}
Following the analysis in Figure~\ref{fig:babel}, we compute the Babel scores for dictionaries learned on language model embeddings using MP, Vanilla, and TopK SAEs. Results are shown in Figure~\ref{fig:babel_llm}, reporting both the coherence of the full dictionary (left) and that of the concepts co-activated at inference (right).

The same trends observed in the vision models hold here as well. MP-SAE learns dictionaries with higher Babel scores globally—indicating more interference across features—but selects sets of concepts with lower mutual interference at inference time. This again reflects MP’s bias toward conditional orthogonality. Conversely, Vanilla and TopK SAEs learn more globally incoherent dictionaries, but their inference-time selections often include more correlated features. Notably, the y-axis ranges in Figure~\ref{fig:babel_llm} are similar to those from the vision experiments~\ref{fig:babel}, suggesting that this behavior is not modality-specific indicating that this effect is not data dependent.

\begin{figure}[H]
    \centering
    \includegraphics[width=0.9\linewidth]{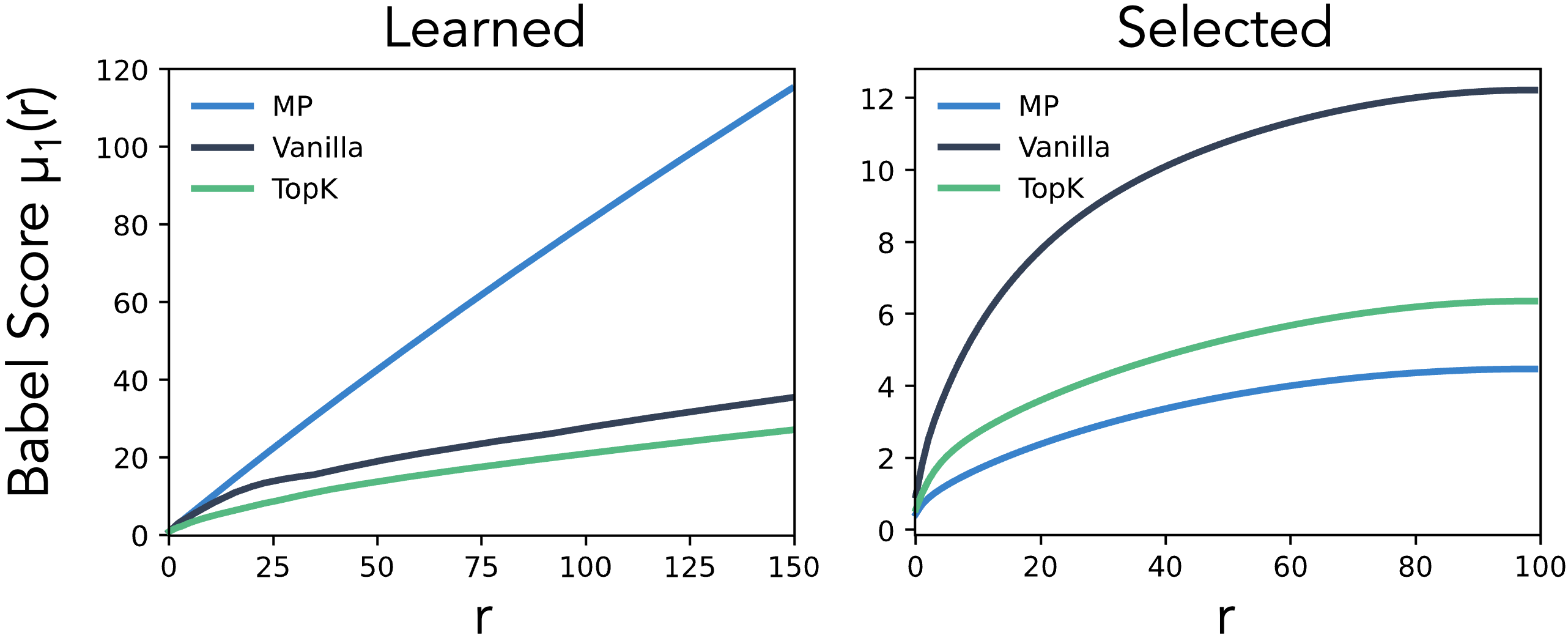}
    \caption{\textbf{Babel score analysis on language models.} We report the coherence of the full learned dictionary (left) and the subset of concepts co-activated at inference (right), across MP, Vanilla, and TopK SAEs. As in the vision setting, MP learns globally coherent dictionaries but selects less interfering concepts at inference, while Vanilla and TopK show the opposite pattern. The similarity in value ranges across modalities suggests that these effects are not data dependent.}
    \label{fig:babel_llm}
\end{figure}

\paragraph{Effective Rank.}
We compute the effective rank of the co-activation matrix $\Z^\top \Z$ for MP, ReLU, and TopK SAEs trained on language model embeddings.

As shown in Figure~\ref{fig:rank_llm}, MP-SAE exhibits a non-monotonic trend: the effective rank increases with sparsity, peaks at a critical point, and then declines. This suggests that MP initially promotes diverse feature use before redundancy emerges. In contrast, ReLU shows a steady decline and eventually saturates, while TopK decays slowly but drops sharply when the inference $\ell_0$ exceeds the training level. The drop in effective rank reflects increasing feature reuse and reduced diversity in the representation.

\begin{figure}[H]
    \centering
    \includegraphics[width=\linewidth]{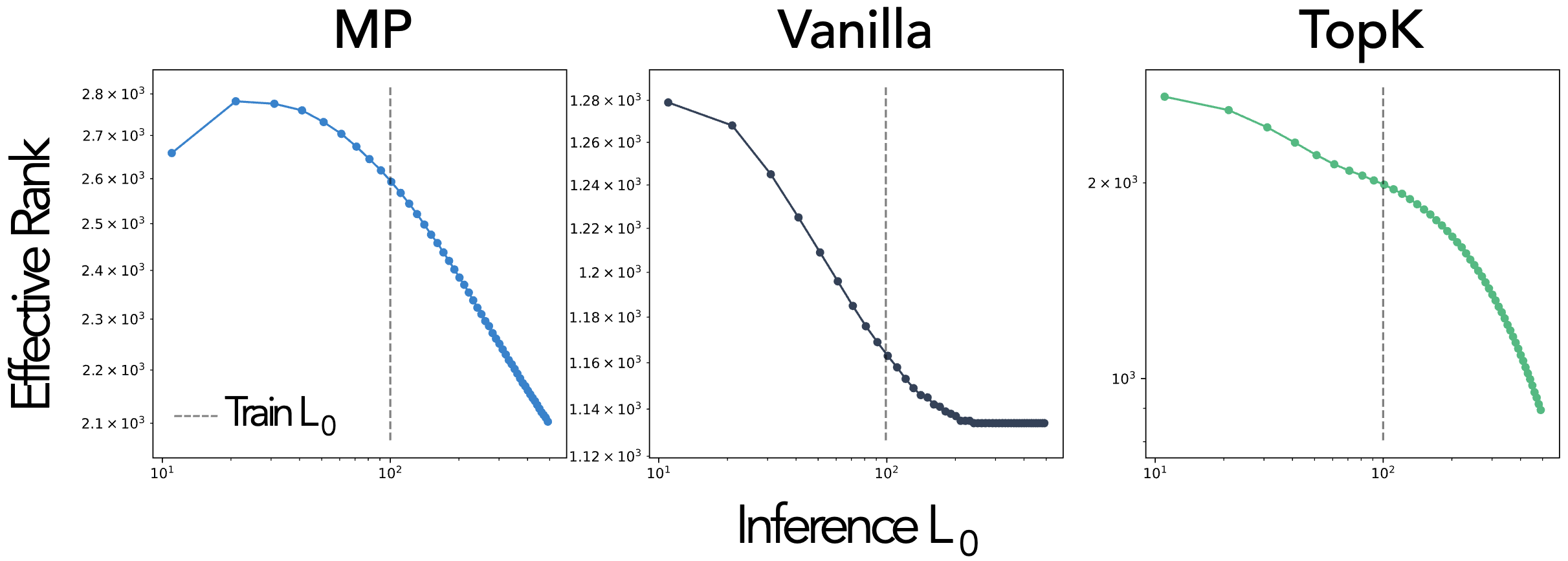}
    \caption{\textbf{Effective rank of the co-activation matrix $\Z^\top \Z$.} MP-SAE shows a rise-then-drop trend, indicating initial diversification followed by redundancy. ReLU steadily declines and saturates, while TopK decays slowly but drops sharply when inference sparsity exceeds training.}

    \label{fig:rank_llm}
\end{figure}

\newpage
\section{Theoretical Guarantees for MP-SAE Inference}\label{app:ortho_proof}

We restate three foundational properties of Matching Pursuit—originally established in the sparse coding literature~\cite{MallatMP}—and interpret them in the context of sparse autoencoders. These properties help elucidate the structure and dynamics of the representations learned by MP-SAE.

\begin{itemize}
    \item \textbf{Stepwise orthogonality} (Proposition~\ref{prop:ortho}): at each iteration, the residual becomes orthogonal to the feature most recently selected by the greedy inference rule. This sequential orthogonalization mechanism gives rise to a locally disentangled structure in the representation and reflects the conditional independence induced by MP-SAE inference.

    \item \textbf{Monotonic decrease of residual energy} (Proposition~\ref{prop:monotonic}): the $\ell_2$ norm of the residual decreases whenever it retains a nonzero projection onto the span of the dictionary. This guarantees that inference steps lead to progressively refined reconstructions, and enables sparsity to be adaptively tuned at inference time without retraining.

    \item \textbf{Asymptotic convergence} (Proposition~\ref{prop:convergence}): in the limit of infinite inference steps, the reconstruction converges to the orthogonal projection of the input onto the subspace defined by the dictionary. Thus, MP-SAE asymptotically recovers all structure that is representable within its learned basis.\\
\end{itemize}

\begin{proposition}[Stepwise Orthogonality of MP Residuals]\label{prop:ortho}
Let $\res^{(t)}$ denote the residual at iteration $t$ of MP-SAE inference, and let $j^{(t-1)}$ be the index of the feature selected at step $t{-}1$. If the column $j^{(t-1)}$ of the dictionary $\D$ satisfy $\|\D_{j^{(t-1)}}\|_2 = 1$, then the residual becomes orthogonal to the previously selected feature:
\[
\D_{j^{(t-1)}}^\top \res^{(t)} = 0.
\]
\end{proposition}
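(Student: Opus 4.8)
The plan is to prove Proposition~\ref{prop:ortho} by direct computation using the residual update rule from Algorithm~\ref{alg:mpsae}. The key observation is that lines~\ref{algo:code} and~\ref{algo:res} together define $\res^{(t)}$ in terms of $\res^{(t-1)}$ and the selected feature $\D_{j^{(t-1)}}$, so the claimed orthogonality reduces to a one-line algebraic identity.

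Concretely, I would proceed as follows. First, recall from line~\ref{algo:res} (shifting the iteration index) that
\[
\res^{(t)} = \res^{(t-1)} - \z^{(t-1)}_{j^{(t-1)}} \D_{j^{(t-1)}},
\]
and from line~\ref{algo:code} that the coefficient was chosen as $\z^{(t-1)}_{j^{(t-1)}} = \D_{j^{(t-1)}}^\top \res^{(t-1)}$. Second, take the inner product of both sides of the residual update with $\D_{j^{(t-1)}}$:
\[
\D_{j^{(t-1)}}^\top \res^{(t)}
= \D_{j^{(t-1)}}^\top \res^{(t-1)} - \z^{(t-1)}_{j^{(t-1)}}\, \D_{j^{(t-1)}}^\top \D_{j^{(t-1)}}.
\]
Third, substitute the value of $\z^{(t-1)}_{j^{(t-1)}}$ and use the unit-norm hypothesis $\|\D_{j^{(t-1)}}\|_2^2 = \D_{j^{(t-1)}}^\top \D_{j^{(t-1)}} = 1$, which gives
\[
\D_{j^{(t-1)}}^\top \res^{(t)}
= \D_{j^{(t-1)}}^\top \res^{(t-1)} - \bigl(\D_{j^{(t-1)}}^\top \res^{(t-1)}\bigr)\cdot 1 = 0.
\]
This is exactly the claimed identity. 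For completeness one might also note the edge case $t=0$ is vacuous since there is no "previously selected feature," matching the statement which only concerns $t \geq 1$.

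Honestly, there is no real obstacle here: this is the textbook Matching Pursuit orthogonality fact, and the only thing to be careful about is bookkeeping of the iteration index (the residual $\res^{(t)}$ is produced by the update at step $t-1$, which uses the feature $j^{(t-1)}$, so indices must be tracked consistently with Algorithm~\ref{alg:mpsae}). The unit-norm assumption on the selected column is what makes the projection coefficient cancel cleanly; without it one would instead get $\D_{j^{(t-1)}}^\top \res^{(t)} = (1 - \|\D_{j^{(t-1)}}\|_2^2)\,\D_{j^{(t-1)}}^\top \res^{(t-1)}$, which is nonzero in general, so it is worth flagging explicitly that this hypothesis is used. The geometric interpretation — each step subtracts the component of the residual along the chosen atom, leaving a residual in the orthogonal complement of that atom — can be stated in a sentence to motivate why the conditional-orthogonality claims in Section~\ref{sec:methods} follow.
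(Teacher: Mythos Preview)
Your proposal is correct and follows exactly the same argument as the paper: write the residual update, take the inner product with $\D_{j^{(t-1)}}$, substitute the coefficient from line~\ref{algo:code}, and use the unit-norm hypothesis to cancel. The paper's proof is slightly terser (omitting the $t=0$ edge case and the remark about what fails without unit norm), but the substance is identical.
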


\begin{proof}
This follows from the residual update:
\[
\res^{(t)} = \res^{(t-1)} - \D_{j^{(t-1)}} \z_{j^{(t)}}^{(t-1)},
\]
with $\z_{j^{(t)}}^{(t-1)} = \D_{j^{(t-1)}}^\top \res^{(t-1)}$. Taking the inner product with $\D_{j^{(t-1)}}$ gives:
\[
\D_{j^{(t-1)}}^\top \res^{(t)} = \D_{j^{(t-1)}}^\top \res^{(t-1)} - \|\D_{j^{(t-1)}}\|^2 \z_{j^{(t)}}^{(t-1)}  = \z_{j^{(t)}}^{(t-1)}  - \z_{j^{(t)}}^{(t-1)} = 0. \qedhere
\]
\end{proof}

This result captures the essential inductive step of Matching Pursuit: each update removes variance along the most recently selected direction, producing a residual that is orthogonal to it. Applied iteratively, this localized orthogonality promotes the emergence of conditionally disentangled structure in MP-SAE. In contrast, other sparse autoencoders lack this stepwise orthogonality mechanism, which helps explain the trend observed in the Babel function during inference in Figure~\ref{fig:babel}.

\begin{proposition}[Monotonic Decrease of MP Residuals]\label{prop:monotonic}
Let $\res^{(t)}$ denote the residual at iteration $t$ of MP-SAE inference, and let $\z_{j^{(t)}}^{(t)}$ be the nonzero coefficient selected at that step, Then the squared residual norm decreases monotonically:
\[
\|\res^{(t+1)}\|_2^2 - \|\res^{(t)}\|_2^2 = -\|\D_{j^{(t)}} \z_{j^{(t)}}^{(t)}\|_2^2 \leq 0.
\]
\end{proposition}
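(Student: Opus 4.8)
The plan is to compute the difference of squared residual norms directly from the update rule in line~\ref{algo:res} of Algorithm~\ref{alg:mpsae}, and then invoke the stepwise orthogonality already established in Proposition~\ref{prop:ortho} to kill the cross term. First I would write the residual update at step $t$ as
\[
\res^{(t+1)} = \res^{(t)} - \z_{j^{(t)}}^{(t)}\,\D_{j^{(t)}},
\]
and expand the squared norm of the left-hand side:
\[
\|\res^{(t+1)}\|_2^2 = \|\res^{(t)}\|_2^2 - 2\,\z_{j^{(t)}}^{(t)}\,\D_{j^{(t)}}^\top\res^{(t)} + \bigl(\z_{j^{(t)}}^{(t)}\bigr)^2\|\D_{j^{(t)}}\|_2^2.
\]

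The key step is to recognize that the coefficient is defined (line~\ref{algo:code}) precisely as $\z_{j^{(t)}}^{(t)} = \D_{j^{(t)}}^\top\res^{(t)}$, so the middle term is $-2\bigl(\z_{j^{(t)}}^{(t)}\bigr)^2\|\D_{j^{(t)}}\|_2^2$ when we also use that this equals $\D_{j^{(t)}}^\top\res^{(t)}$ — more directly, $\z_{j^{(t)}}^{(t)}\,\D_{j^{(t)}}^\top\res^{(t)} = \bigl(\z_{j^{(t)}}^{(t)}\bigr)^2$. Substituting and collecting terms gives
\[
\|\res^{(t+1)}\|_2^2 - \|\res^{(t)}\|_2^2 = -2\bigl(\z_{j^{(t)}}^{(t)}\bigr)^2 + \bigl(\z_{j^{(t)}}^{(t)}\bigr)^2\|\D_{j^{(t)}}\|_2^2 = -\bigl(\z_{j^{(t)}}^{(t)}\bigr)^2\|\D_{j^{(t)}}\|_2^2 = -\|\z_{j^{(t)}}^{(t)}\D_{j^{(t)}}\|_2^2,
\]
where the second equality uses the normalization $\|\D_{j^{(t)}}\|_2 = 1$ (equivalently, one can keep $\|\D_{j^{(t)}}\|_2^2$ explicit and note $-2a^2\|\D\|^2 + a^2\|\D\|^2 = -a^2\|\D\|^2$). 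This is exactly the claimed identity, and since it is minus a squared norm it is $\le 0$, giving monotonic decrease. An alternative, slightly slicker route is to observe that $\res^{(t)} = \res^{(t+1)} + \z_{j^{(t)}}^{(t)}\D_{j^{(t)}}$ is an \emph{orthogonal} decomposition — the orthogonality $\D_{j^{(t)}}^\top\res^{(t+1)} = 0$ being precisely Proposition~\ref{prop:ortho} applied at index $t{+}1$ — so the Pythagorean theorem immediately yields $\|\res^{(t)}\|_2^2 = \|\res^{(t+1)}\|_2^2 + \|\z_{j^{(t)}}^{(t)}\D_{j^{(t)}}\|_2^2$, which rearranges to the statement.

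There is essentially no obstacle here: the result is a one-line algebraic consequence of the update rule together with the definition of the coefficient as a projection. The only thing to be careful about is bookkeeping — making sure the coefficient used in the residual update is the one computed from $\res^{(t)}$ (not $\res^{(t+1)}$), and being consistent about whether $\|\D_{j^{(t)}}\|_2 = 1$ is assumed (as in the surrounding text and Proposition~\ref{prop:ortho}) or carried symbolically. I would state the proof using the Pythagorean-theorem phrasing since it reuses Proposition~\ref{prop:ortho} cleanly and makes the geometric content transparent: each MP step peels off the component of the residual along the selected atom, and orthogonality of the leftover to that atom forces the energies to add.
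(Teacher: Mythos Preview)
Your proposal is correct and essentially matches the paper's proof: the ``slicker'' Pythagorean route you describe as an alternative---rewriting $\res^{(t)} = \res^{(t+1)} + \z_{j^{(t)}}^{(t)}\D_{j^{(t)}}$, invoking Proposition~\ref{prop:ortho} to kill the cross term, and reading off the energy identity---is exactly the argument the paper gives, while your primary direct-expansion approach is the same computation carried out from the $\res^{(t+1)}$ side. One small slip: the parenthetical claim that one can ``keep $\|\D_{j^{(t)}}\|_2^2$ explicit'' via $-2a^2\|\D\|^2 + a^2\|\D\|^2 = -a^2\|\D\|^2$ is not right without the unit-norm assumption (the cross term is $-2a^2$, not $-2a^2\|\D\|^2$), so the normalization $\|\D_{j^{(t)}}\|_2 = 1$ really is needed for the stated identity, just as it is in Proposition~\ref{prop:ortho}.
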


\begin{proof}
From the residual update:
\[
\res^{(t+1)} = \res^{(t)} - \D_{j^{(t)}} \z_{j^{(t)}}^{(t)},
\]
we can rearrange to write:
\[
\res^{(t)} = \res^{(t+1)} + \D_{j^{(t)}} \z_{j^{(t)}}^{(t)}.
\]
Taking the squared norm of both sides:
\begin{align*}
\|\res^{(t)}\|_2^2 &= \|\res^{(t+1)} + \D_{j^{(t)}} \z_{j^{(t)}}^{(t)}\|_2^2 \\
&= \|\res^{(t+1)}\|_2^2 + 2 \langle \res^{(t+1)}, \D_{j^{(t)}} \rangle \z_{j^{(t)}}^{(t)} + \|\D_{j^{(t)}} \z_{j^{(t)}}^{(t)}\|_2^2.
\end{align*}
By Proposition~\ref{prop:ortho}, the cross term vanishes:
\[
\langle \res^{(t+1)}, \D_{j^{(t)}} \rangle = 0,
\]
yielding:
\[
\|\res^{(t)}\|_2^2 = \|\res^{(t+1)}\|_2^2 + \|\D_{j^{(t)}} \z_{j^{(t)}}^{(t)}\|_2^2. \qedhere
\]
\end{proof}

The monotonic decay of residual energy ensures that each inference step yields an improvement in reconstruction, as long as the residual lies within the span of the dictionary. Crucially, this property enables MP-SAE to support adaptive inference-time sparsity: the number of inference steps can be varied at test time—independently of the training setup—while still allowing the model to progressively refine its approximation. This explains the continuous decay observed in Figure~\ref{fig:inference_at_k}, a guarantee not provided by other sparse autoencoders.

\begin{proposition}[Asymptotic Convergence of MP Residuals]\label{prop:convergence}
Let $\hat{\x}^{(t)} = \x - \res^{(t)}$ denote the reconstruction at iteration $t$, and let $\mathbf{P}_{\D}$ be the orthogonal projector onto $\mathrm{span}(\D)$. Then:
\[
\lim_{t \to \infty} \| \hat{\x}^{(t)} - \mathbf{P}_{\D} \x \|_2 = \lim_{t \to \infty} \| \mathbf{P}_{\D} \res^{(t)} \|_2 = 0.
\]
\end{proposition}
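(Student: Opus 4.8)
The plan is to combine the monotonic decay of residual energy (Proposition~\ref{prop:monotonic}) with the greedy selection rule to show that the component of the residual lying in $\mathrm{span}(\D)$ must vanish. First I would observe that, by Proposition~\ref{prop:monotonic}, the sequence $\|\res^{(t)}\|_2^2$ is non-increasing and bounded below by $0$, hence convergent; consequently the per-step decrements $\|\D_{j^{(t)}}\z^{(t)}_{j^{(t)}}\|_2^2 = (\D_{j^{(t)}}^\top\res^{(t)})^2$ form a summable series and in particular $\D_{j^{(t)}}^\top\res^{(t)}\to 0$. Since $j^{(t)}$ is chosen to maximize $|\D_j^\top\res^{(t)}|$ over all $j\in[p]$ (line~\ref{algo:selection} of Algorithm~\ref{alg:mpsae}), this forces $\max_{j\in[p]}|\D_j^\top\res^{(t)}|\to 0$, i.e. the residual becomes asymptotically orthogonal to every dictionary column.

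Next I would translate ``asymptotically orthogonal to all columns'' into ``projection onto $\mathrm{span}(\D)$ vanishes.'' Write $\res^{(t)} = \mathbf{P}_\D\res^{(t)} + (\eye - \mathbf{P}_\D)\res^{(t)}$. The vector $\mathbf{P}_\D\res^{(t)}$ lies in the column space, so it can be written as a combination of the $\D_j$; using that $\D^\top\res^{(t)}\to\zero$ (all entries) and that $\mathbf{P}_\D = \D(\D^\top\D)^{+}\D^\top$ — or, more elementarily, bounding $\|\mathbf{P}_\D\res^{(t)}\|_2^2 = \langle \res^{(t)}, \mathbf{P}_\D\res^{(t)}\rangle$ and expanding $\mathbf{P}_\D\res^{(t)}$ in the $\D_j$ — one gets $\|\mathbf{P}_\D\res^{(t)}\|_2 \le C\,\|\D^\top\res^{(t)}\|_\infty$ for a constant $C$ depending only on $\D$ (e.g. via the pseudoinverse of $\D^\top\D$ restricted to the range, or a frame bound). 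Since $\|\D^\top\res^{(t)}\|_\infty \to 0$, this yields $\|\mathbf{P}_\D\res^{(t)}\|_2\to 0$. Finally, because $\hat{\x}^{(t)} = \x - \res^{(t)}$ and every update subtracts a vector in $\mathrm{span}(\D)$, we have $\hat{\x}^{(t)} - \mathbf{P}_\D\x = \mathbf{P}_\D(\x - \res^{(t)}) - \mathbf{P}_\D\x - (\eye-\mathbf{P}_\D)\res^{(t)}$; noting $\hat{\x}^{(t)}\in\mathrm{span}(\D)$ (assuming $\bm b_{\text{pre}}$ is handled by the initialization, or absorbing it), this simplifies so that $\hat{\x}^{(t)} - \mathbf{P}_\D\x = -\mathbf{P}_\D\res^{(t)}$, giving both limits simultaneously.

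The main obstacle I anticipate is the step $\|\mathbf{P}_\D\res^{(t)}\|_2 \le C\,\|\D^\top\res^{(t)}\|_\infty$: one must be careful because $\D$ is overcomplete and generally not a tight frame, so the constant $C$ comes from the smallest nonzero singular value of $\D$ (equivalently, the inverse of the frame lower bound on $\mathrm{span}(\D)$), and one should confirm this is finite — which it is, since $\D$ has finitely many columns. A secondary subtlety is ruling out the degenerate case where the greedy rule stalls because $\D^\top\res^{(t)} = \zero$ while $\res^{(t)}\neq\zero$: this cannot happen, since $\D^\top\res^{(t)}=\zero$ already implies $\mathbf{P}_\D\res^{(t)}=\zero$, so the conclusion holds trivially at that step. (In the classical Mallat analysis this convergence is typically argued in the separable-Hilbert-space setting with a density/completeness assumption; in the finite-dimensional dictionary setting here the argument is cleaner and the bound above makes it fully rigorous.) I would also state explicitly the mild standing assumption that all columns are unit-norm, as in Proposition~\ref{prop:ortho}, so that the decrement identity $(\D_{j^{(t)}}^\top\res^{(t)})^2$ is exact.
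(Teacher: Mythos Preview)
The paper does not prove this proposition itself; it simply cites Theorem~1 of Mallat--Zhang~\cite{MallatMP}. Your proposal supplies a self-contained finite-dimensional argument (summable decrements $\Rightarrow \|\D^\top\res^{(t)}\|_\infty\to 0 \Rightarrow \|\mathbf{P}_\D\res^{(t)}\|_2\to 0$ via a pseudoinverse/frame bound, followed by the identity $\hat{\x}^{(t)}-\mathbf{P}_\D\x = -\mathbf{P}_\D\res^{(t)}$), which is exactly the Mallat--Zhang reasoning specialized to a finite dictionary, as you yourself note in your closing parenthetical. The argument is correct and strictly more explicit than what the paper offers.

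One small discrepancy worth flagging: Algorithm~\ref{alg:mpsae} as written selects $j^{(t)}=\argmax_j(\D_j^\top\res^{(t)})$ \emph{without} absolute value, whereas your proof (and Mallat--Zhang's) uses the $\argmax_j|\D_j^\top\res^{(t)}|$ rule. Under the literal no-abs rule your step ``$\max_j|\D_j^\top\res^{(t)}|\to 0$'' does not follow, and indeed convergence can fail (e.g.\ with orthonormal $\D=[\D_1,\D_2]$ and $\res^{(0)}=-\D_1$, the no-abs rule selects $\D_2$ with coefficient $0$ and stalls). Since the paper explicitly appeals to Mallat--Zhang for this proposition, your reading is clearly the intended one; the mismatch is in the paper's algorithm statement, not in your argument.
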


This convergence result is formally established in the original Matching Pursuit paper by Mallat and Zhang~\cite[Theorem 1]{MallatMP}. This result implies that MP-SAE progressively reconstructs the component of $\x$ that lies within the span of the dictionary, converging to its orthogonal projection in the limit of infinite inference steps. When the dictionary is complete (i.e., $\mathrm{rank}(\D) = m$), this guarantees convergence to the input signal $\x$.

\vspace{3cm}
\section{Societal Impact}

Interpretability plays a critical role in the safe and trustworthy deployment of AI systems. As large-scale models are increasingly integrated into everyday technologies and used by millions of people, the risks associated with their opaque decision-making grow substantially. Without interpretability, it becomes difficult to detect biases, failures, or unintended behaviors in these powerful systems. By enabling the extraction of structured, human-interpretable features, tools like Sparse Autoencoders help researchers and practitioners understand how models encode semantics, reasoning patterns, or social attributes. This transparency is especially important in safety-critical domains such as healthcare, legal decision-making, or education, where opaque model behavior can result in harmful or unfair outcomes. Interpretability methods also support model auditing and targeted interventions, making it possible to align AI behavior with human values. However, such tools can also be misused—for example, to reverse-engineer proprietary models or infer sensitive attributes. We emphasize that our method does not amplify these existing risks.

\end{document}